\documentclass{article}

\usepackage{arxiv}

\usepackage[utf8]{inputenc} 
\usepackage[T1]{fontenc}    
\usepackage{hyperref}       
\usepackage{url}            
\usepackage{booktabs}       
\usepackage{amsfonts}       
\usepackage{nicefrac}       
\usepackage{microtype}      
\usepackage{lipsum}		
\usepackage{graphicx}
\usepackage{natbib}
\usepackage{doi}

\usepackage{algorithm}
\usepackage{algpseudocode}
\usepackage{amsthm}
\usepackage{amsmath}

\usepackage{xcolor}       
\usepackage{tikz}         
\usepackage{pgfplots}     
\pgfplotsset{compat=1.18}

\theoremstyle{plain}
\newtheorem{theorem}{Theorem}
\DeclareMathOperator*{\argmin}{arg\,min}

\title{Resource-Efficient Gesture Recognition through Convexified Attention}

\author{
	\href{https://orcid.org/0000-0002-7301-2622}{\includegraphics[scale=0.06]{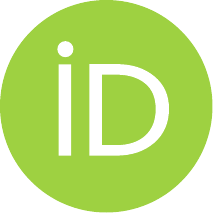}\hspace{1mm}Daniel Schwartz} \\
	Drexel University\\
	3141 Chestnut Street\\
	Philadelphia, PA 19104 \\
	\texttt{des338@drexel.edu} \\
	\And
	\href{https://orcid.org/0000-0001-8812-4996}{\includegraphics[scale=0.06]{orcid.pdf}\hspace{1mm}Dario Salvucci} \\
	Drexel University\\
	3141 Chestnut Street\\
	Philadelphia, PA 19104 \\
	\texttt{dds26@drexel.edu} \\
	\And
	\href{https://orcid.org/0000-0002-9997-9479}{\includegraphics[scale=0.06]{orcid.pdf}\hspace{1mm}Yusuf Osmanlioglu} \\
	Drexel University\\
	3141 Chestnut Street\\
	Philadelphia, PA 19104 \\
	\texttt{yo42@drexel.edu} \\
	\And
	\href{https://orcid.org/0000-0002-7876-4137}{\includegraphics[scale=0.06]{orcid.pdf}\hspace{1mm}Richard Vallett} \\
	Drexel University\\
	3141 Chestnut Street\\
	Philadelphia, PA 19104 \\
	\texttt{rjv35@drexel.edu} \\
	\And
	\href{https://orcid.org/0000-0002-4873-288X}{\includegraphics[scale=0.06]{orcid.pdf}\hspace{1mm}Genevieve Dion} \\
	Drexel University\\
	3141 Chestnut Street\\
	Philadelphia, PA 19104 \\
	\texttt{gd63@drexel.edu} \\
	\And
	\href{https://orcid.org/0000-0002-3729-4490}{\includegraphics[scale=0.06]{orcid.pdf}\hspace{1mm}Ali Shokoufandeh} \\
	Drexel University\\
	3141 Chestnut Street\\
	Philadelphia, PA 19104 \\
	\texttt{as79@drexel.edu} \\
}

\date{}

\renewcommand{\headeright}{}
\renewcommand{\shorttitle}{Resource-Efficient Gesture Recognition through Convexified Attention}

\hypersetup{
pdftitle={Resource-Efficient Gesture Recognition through Convexified Attention},
pdfauthor={Daniel Schwartz},
pdfkeywords={Wearable Computing, Gesture Recognition, E-textiles, Convex Optimization},
}

\begin{document}
\maketitle

\begin{abstract}
	Wearable e-textile interfaces require gesture recognition capabilities but face severe constraints in power consumption, computational capacity, and form factor that make traditional deep learning impractical. While lightweight architectures like MobileNet improve efficiency, they still demand thousands of parameters, limiting deployment on textile-integrated platforms. We introduce a convexified attention mechanism for wearable applications that dynamically weights features while preserving convexity through nonexpansive simplex projection and convex loss functions. Unlike conventional attention mechanisms using non-convex softmax operations, our approach employs Euclidean projection onto the probability simplex combined with multi-class hinge loss, ensuring global convergence guarantees. Implemented on a textile-based capacitive sensor with four connection points, our approach achieves 100.00\% accuracy on tap gestures and 100.00\% on swipe gestures---consistent across 10-fold cross-validation and held-out test evaluation---while requiring only 120--360 parameters, a 97\% reduction compared to conventional approaches. With sub-millisecond inference times (290--296$\mu$s) and minimal storage requirements ($<$7KB), our method enables gesture interfaces directly within e-textiles without external processing. Our evaluation, conducted in controlled laboratory conditions with a single-user dataset, demonstrates feasibility for basic gesture interactions. Real-world deployment would require validation across multiple users, environmental conditions, and more complex gesture vocabularies. These results demonstrate how convex optimization can enable efficient on-device machine learning for textile interfaces.
\end{abstract}

\keywords{Wearable Computing \and Gesture Recognition \and E-textiles \and Convex Optimization \and Attention Mechanisms \and Resource-constrained Devices \and Edge Computing \and Capacitive Touch Sensing \and Efficient Neural Networks \and Smart Textiles}

\section{Introduction}

    Gesture recognition enables natural interactions with wearable computing interfaces---from smart textiles and fitness trackers to augmented reality glasses. However, wearable devices face severe constraints in power, computation, and memory that make traditional deep learning impractical for real-time on-device processing \cite{lane2017squeezing, banbury2021micronets}. This is particularly challenging for textile-based sensors, which require efficient recognition algorithms that can interpret user movements while maintaining extended battery life \cite{lane2016deepx, zhang2022deep}.

    Convolutional Neural Networks (CNNs) achieve high accuracy for gesture recognition but require substantial computational resources. Lightweight architectures such as SqueezeNet \cite{iandola2016squeezenet} and MobileNet \cite{howard2017mobilenets} reduce these demands through fire modules and depthwise separable convolutions, respectively, but still require thousands of parameters (SqueezeNet: 12,772; MobileNet: 13,188). Simplified single-layer CNNs require 1,316 parameters for tap gesture recognition and 3,972 for swipe gesture recognition on our textile sensor dataset. For microcontroller-based wearables with limited SRAM (typically 32-256KB), these parameter counts can limit deployment feasibility when multiple models or application components must coexist in memory.

    Attention mechanisms enhance feature selection and recognition performance by focusing on task-relevant inputs \cite{chen2020deep, gholami2022survey}. However, conventional attention mechanisms increase number of parameters from 1,316 to 4,788 for tap gestures and from 3,972 to 15,028 for swipe gestures when attention is incorporated. This increase makes them unsuitable for resource-constrained wearable applications.

    The Convexified Convolutional Neural Network (CCNN) reformulates neural networks within a convex optimization framework \cite{zhang2017convexified}, offering parameter efficiency and convergence guarantees. However, standard CCNN lacks dynamic feature weighting mechanisms needed for variable sensor conditions. Convex ViT \cite{sahiner2022unraveling} incorporates attention mechanisms within a convex formulation but remains too computationally demanding for wearable devices.

    We present a convexified attention mechanism for textile-based gesture recognition that integrates dynamic feature weighting into the CCNN framework while maintaining end-to-end convexity. Our approach replaces softmax normalization with nonexpansive Euclidean projection onto the probability simplex $\Delta^{n-1} = \{\mathbf{w} \in \mathbb{R}^n : \mathbf{w} \geq 0, \sum_{i=1}^n w_i = 1\}$ and uses convex loss functions (hinge or squared loss). The system requires only $120$--$360$ parameters (less than $10\%$ of a shallow CNN and approximately $1\%$ of lightweight deep NN models) while achieving $100.00\%$ for tap gestures and $100\%$ for swipe gestures on textile-based capacitive sensors. The convex structure provides global convergence guarantees and enables reliable performance on unstable sensor inputs typical of e-textile \cite{de2022artificial, ramirez2023introductory}.

    Our implementation operates on Arduino Nano 33 BLE with sub-millisecond inference latency, enabling real-time gesture recognition directly on e-textile interfaces without cloud connectivity. The approach demonstrates that mathematical convexity can be leveraged for efficient attention mechanisms in resource-constrained wearable systems \cite{wilk2018wearable, bhatt2023iomt}.

\section{Related Work} \label{sec:related_work}

    Engineering wearable gesture recognition systems requires addressing challenges across e-textiles, embedded computing, and machine learning. This section reviews key advancements within these disciplines and identifies the research gap that our work addresses.

    E-textile platforms with conductive fiber meshes represent a promising direction for integrating sensing capabilities into everyday clothing \cite{vallett2016development, mcdonald2020knitted}. These systems enable touch localization and pressure sensing with minimal wiring, improving durability and wearability \cite{vallett2020toward, luo2021knitui}. However, textile-based sensors present unique engineering challenges for signal processing and classification. They typically produce low-dimensional, coupled time-series data that requires specialized preprocessing approaches \cite{mcdonald2022interaction}. Recent advances have explored various feature extraction and dimensionality reduction techniques to improve recognition accuracy \cite{tchantchane2023review, vallett2024advanced}, but these approaches often struggle with the resource constraints of wearable devices.

    Resource constraints in wearable platforms necessitate lightweight architectures for on-device inference. MobileNet \cite{howard2017mobilenets} and SqueezeNet \cite{iandola2016squeezenet} reduce parameter counts through depthwise separable convolutions and fire modules respectively, while MobileNetV2 \cite{sandler2018mobilenetv2} and EfficientNet \cite{tan2019efficientnet} further optimize architecture design. Despite these advances, even these efficient models remain too computationally intensive for many wearable applications \cite{lane2017squeezing, banbury2021micronets}. Convex optimization approaches, such as Convexified Convolutional Neural Networks \cite{zhang2017convexified}, offer extreme parameter efficiency by leveraging the Reproducing Kernel Hilbert Space (RKHS) framework \cite{rahimi2007random, scholkopf2018learning}. These methods provide global convergence guarantees but often lack the expressiveness needed for complex sensing tasks.

    Attention mechanisms enable models to dynamically focus on relevant features \cite{vaswani2017attention, devlin2018bert}, a capability particularly valuable for wearable sensing where signal quality and relevance fluctuate during everyday activities. However, conventional attention approaches introduce substantial computational overhead \cite{katharopoulos2020transformers, choromanski2020rethinking}, making them impractical for resource-constrained wearables. Recent work on efficient attention formulations, including Linformer \cite{wang2020linformer} and kernel-based methods \cite{choromanski2020rethinking}, has reduced computational requirements, but these approaches remain challenging to deploy on ultra-low-power devices. Convexified attention models like Convex ViT \cite{sahiner2022unraveling} offer theoretical advantages but have not been demonstrated on the low-dimensional time-series data typical in wearable applications.

    Existing approaches face a tradeoff: convex methods like CCNN \cite{zhang2017convexified} provide global convergence guarantees and parameter efficiency but lack dynamic feature weighting mechanisms. Conversely, attention-based deep learning models \cite{vaswani2017attention,katharopoulos2020transformers} enable adaptive feature selection but require thousands of parameters and lack convergence guarantees. Our work integrates attention mechanisms within a convex optimization framework by replacing non-convex operations (softmax, cross-entropy loss) with convex alternatives (nonexpansive simplex projection, hinge loss). This design maintains global convergence guarantees while enabling dynamic feature weighting, using only 120-360 parameters. We implement and evaluate this approach on textile-based capacitive sensors for basic gesture recognition.

\section{Methodology}
\label{sec:methodology}

    \textbf{Overview.} Our approach addresses three constraints simultaneously: (1) extreme parameter efficiency for microcontroller deployment, (2) real-time inference on low-power hardware, and (3) reliable convergence guarantees. We achieve this by reformulating gesture classification as a convex optimization problem with an attention mechanism that dynamically weights temporal features. Unlike conventional neural networks that may converge to different local minima depending on initialization, our convex formulation guarantees convergence to a global optimum---ensuring consistent, predictable behavior across devices. The key insight, illustrated in Figure~\ref{fig:methodology_overview}, is replacing the non-convex softmax operation in standard attention with a convex Euclidean projection onto the probability simplex, which preserves the feature-weighting benefits of attention while maintaining mathematical tractability.

    \begin{figure}[t]
      \centering
      \includegraphics[width=0.95\linewidth]{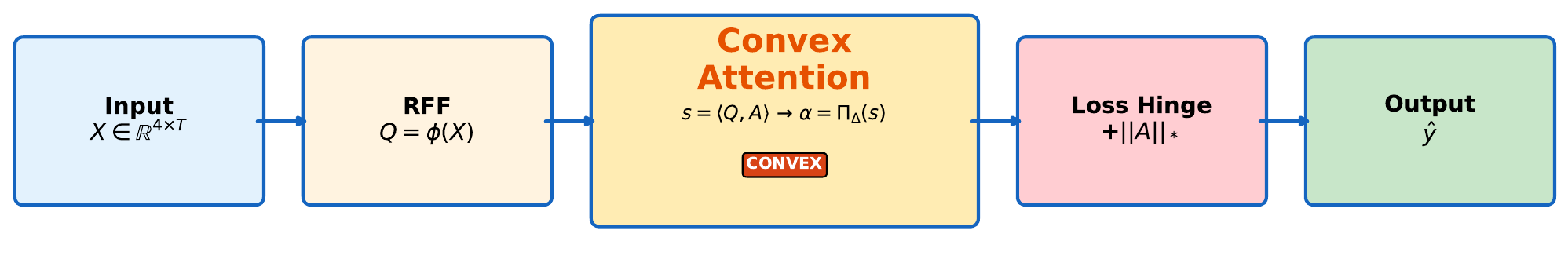}
      \caption{System architecture showing the complete pipeline from raw capacitive sensor input through classification output. Raw 4-channel electrode signals are first transformed via Random Fourier Features (RFF) to approximate RBF kernel mappings, enabling nonlinear pattern recognition within a linear framework. The convexified attention mechanism then computes class-specific weights via Euclidean projection onto the probability simplex (Algorithm~\ref{alg:simplex_projection}), dynamically emphasizing temporally relevant patterns for each gesture class. Nuclear norm regularization promotes low-rank weight matrices, reducing effective parameters. The entire pipeline requires only 120--360 parameters and executes in under 300$\mu$s on Arduino Nano 33 BLE.}
      \label{fig:methodology_overview}
    \end{figure}

    \subsection{System Requirements and Problem Formulation}

       \noindent\textbf{Sensor Configuration.}
            Our capacitive textile sensor employs four conductive electrodes positioned at the corners of a flexible knitted surface. Each electrode measures capacitance changes induced by finger proximity, generating time-series data sampled at 250Hz (Figure~\ref{fig:signal_patterns}). The resulting input for a single gesture is represented as $X \in \mathbb{R}^{C \times T}$ where $C=4$ channels (electrodes) and $T$ denotes the temporal length: $T=10$ frames for tap gestures ($\sim$40ms duration) and $T=30$ frames for swipe gestures ($\sim$120ms duration).

            \begin{figure}[t]
              \centering
              \includegraphics[width=0.75\linewidth]{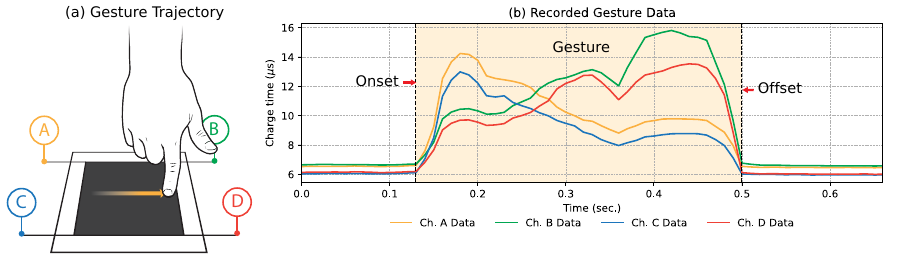}
              \caption{Capacitance signal patterns for tap and swipe gestures across four electrode channels.}
              \label{fig:signal_patterns}
            \end{figure}

        \noindent\textbf{Classification Objective.} 
            Given a training dataset $\{(X_i, y_i)\}_{i=1}^n$ where $y_i \in \{1, \ldots, K\}$ indicates directional gesture class ($K=4$ for North/South/East/West), we learn a weight matrix $A$ that satisfies three system constraints:

            \begin{enumerate}
                \item \textbf{Parameter Efficiency:} $<$500 total parameters for Arduino deployment (256KB SRAM budget)
                \item \textbf{Real-Time Inference:} $<$1ms prediction latency on ARM Cortex-M4 @ 64MHz
                \item \textbf{Convergence Guarantees:} Global optimality via convex optimization
            \end{enumerate}

        \noindent\textbf{Temporal Patch Representation.}
            To capture temporal structure, we partition each input $X$ into $P$ non-overlapping patches: $X = [x_1, \ldots, x_P]$ where $x_p \in \mathbb{R}^{C \times (T/P)}$ represents a temporal segment. This representation enables the attention mechanism to selectively weight different temporal phases of the gesture (e.g., initial contact vs. mid-gesture vs. release).

        \noindent\textbf{Why Convexity Matters for Wearables.}
            The choice of a convex formulation is motivated by practical deployment requirements. First, convex problems have a unique global optimum, eliminating the initialization sensitivity that plagues deep networks---critical when models must be retrained on-device with limited computational budgets for hyperparameter search. Second, convex optimization converges predictably, enabling reliable estimation of training time and energy consumption on battery-powered devices. Third, the mathematical structure of convex problems enables extreme parameter compression while maintaining expressiveness, as we demonstrate by achieving competitive accuracy with 120--360 parameters versus thousands in conventional approaches.            

    \subsection{Convexified Neural Network Foundation}

        The CCNN framework \cite{zhang2017convexified} reformulates neural network training as a convex optimization problem. The core optimization objective is:
        \begin{equation}
        \min_{A} \mathcal{L}(X, A) + \lambda \|A\|_*
        \label{eq:ccnn_objective}
        \end{equation}
        where $\mathcal{L}$ is a convex loss function, $\|A\|_* = \sum_i \sigma_i(A)$ denotes the nuclear norm (sum of singular values), and $\lambda$ controls the regularization strength. The nuclear norm promotes low-rank solutions, reducing the effective number of parameters.

         \noindent\textbf{Non-Linear Feature Transformation via Random Fourier Features.}        
        To enable non-linear pattern recognition within a linear classifier framework, we employ Random Fourier Features (RFF) \cite{rahimi2007random} to approximate Radial Basis Function (RBF) kernels. For each patch $x_p$, the transformation is:
        \begin{equation}
        Q_p = \phi(x_p) = \sqrt{\frac{2}{m}} \cos(x_p W + b)
        \label{eq:rff_transform}
        \end{equation}
        where $W \in \mathbb{R}^{(C \cdot T/P) \times m}$ is sampled from $\mathcal{N}(0, 2\gamma I)$, $b \in \mathbb{R}^m$ from $\text{Uniform}[0, 2\pi]$, and $m$ is the approximation dimension. These random parameters are sampled once during initialization and remain fixed throughout training.

        This transformation approximates the RBF kernel: $\langle \phi(x_p), \phi(x_q) \rangle \approx \exp(-\gamma \|x_p - x_q\|^2)$, enabling the linear classifier to capture non-linear capacitive patterns. Following transformation, the complete feature representation is $Q = [Q_1; \ldots; Q_P] \in \mathbb{R}^{P \times m}$.

        In terms of implementation parameters, our experiments utilize an approximation dimension of $m=3$ and an RBF width of $\gamma=1.0$, resulting in $3P$ features per sample. For tap gestures with $P=10$ patches, this yields 30 features, enabling the entire model to be fitted within 120 parameters.

        We would like to draw attention to the limitation inherent in the standard CCNN, which uniformly treats all temporal patches. In the context of textile sensors, the initial contact phase carries distinct information compared to sustained touch or release phases. Our attention mechanism addresses this limitation by learning to assign weights to patches based on their relevance to each gesture class.

    \subsection{Convexified Attention Mechanism}

        Our attention mechanism dynamically reweights features while preserving the convex structure of the optimization problem. Conventional attention mechanisms use softmax normalization, which introduces non-convexity. We replace softmax with nonexpansive Euclidean projection onto the probability simplex—a convex operation.

         \noindent\textbf{Attention Score Computation.} For each gesture class $k \in \{1, \ldots, K\}$ and temporal patch $p \in \{1, \ldots, P\}$, we compute an alignment score:
        \begin{equation}
        s_{kp} = \frac{1}{\sqrt{m}} \langle Q_p, A_{k,p} \rangle
        \label{eq:attention_scores}
        \end{equation}
        where $Q_p \in \mathbb{R}^{m}$ is the transformed feature vector for patch $p$, $A_{k,p} \in \mathbb{R}^{m}$ is the corresponding weight vector for class $k$, and $\langle \cdot, \cdot \rangle$ denotes inner product. The scaling factor $1/\sqrt{m}$ provides numerical stability across different feature dimensions.

         \noindent\textbf{Convex Normalization: Nonexpansive Simplex Projection.}
        Conventional attention mechanisms apply softmax:
        \begin{equation}
        \alpha^{\text{softmax}}_{kp} = \frac{\exp(s_{kp})}{\sum_{p'=1}^P \exp(s_{kp'})}
        \label{eq:softmax_nonconvex}
        \end{equation}

        Softmax is not a convex function of its inputs (Appendix~\ref{sec:simplex_projection_proof}), breaking the mathematical guarantees of our optimization framework. Instead, we compute attention weights via nonexpansive Euclidean projection onto the probability simplex:
        \begin{equation}
        \alpha_k = \Pi_{\Delta}(s_k) = \argmin_{\alpha \in \Delta} \|s_k - \alpha\|_2^2
        \label{eq:simplex_projection}
        \end{equation}
        where $\Delta = \{\alpha \in \mathbb{R}^P : \alpha_p \geq 0, \sum_p \alpha_p = 1\}$ is the probability simplex, $s_k = [s_{k1}, \ldots, s_{kP}]^\top$ is the vector of alignment scores for class $k$, and $\Pi_{\Delta}$ denotes the nonexpansive Euclidean projection operator. Our nonexpansive projection is computed via minimizing the squared distance to the simplex. While the projection mapping $v\mapsto\Pi_\Delta(v)$ itself is not a convex function, the squared distance $d_\Delta^2(v) = \|v - \Pi_\Delta(v)\|_2^2$ is convex (Theorem \ref{thm:simplex_convexity}, Appendix~\ref{sec:simplex_projection_proof}), which enables our convex formulation. The nonexpansive (1-Lipschitz) property of $\Pi_\Delta$ ensures stable gradients and preserves the convexity of the overall optimization problem.

        \begin{algorithm}[h]
        \caption{Nonexpansive Euclidean Projection onto Probability Simplex}
        \label{alg:simplex_projection}
        \begin{algorithmic}[1]
        \Require Score vector $s \in \mathbb{R}^P$
        \State Sort $s$ in descending order: $u \leftarrow \text{sort}(s, \text{descending})$
        \State Find threshold index: $\rho \leftarrow \max\{j : u_j - \frac{1}{j}(\sum_{i=1}^j u_i - 1) > 0\}$
        \State Compute threshold: $\theta \leftarrow \frac{1}{\rho}(\sum_{i=1}^\rho u_i - 1)$
        \State Project: $\alpha_p \leftarrow \max(s_p - \theta, 0)$ for $p = 1, \ldots, P$
        \Ensure Attention weights $\alpha \in \Delta$
        \end{algorithmic}
        \end{algorithm}

        For tap gestures with $P=10$ patches, this requires $O(10 \log 10) \approx 23$ operations. Our Arduino implementation executes this projection in $<$10$\mu$s.

        \noindent\textbf{Connection to Overall Convexity.}
        The projection operation in Equation~\ref{eq:simplex_projection} can be interpreted as solving a convex optimization subproblem at each forward pass:
        \begin{equation}
        \alpha_k = \argmin_{S \in \Delta} \tfrac{1}{2}\|s_k - S\|_2^2
        \label{eq:attention_subproblem}
        \end{equation}
        where $s_k$ depends on the weight matrix $A$ through the alignment scores (Equation~\ref{eq:attention_scores}). This formulation corresponds to case (ii) of Theorem~\ref{thm:overall_convexity} (Appendix~\ref{sec:overall_convexity}): the attention weights are determined by minimizing a jointly convex function $G(A, S)$ over the simplex constraints. Specifically, $G(A, S) = \sum_k \tfrac{1}{2}\|S_k - s_k(A)\|_2^2$ where $s_k(A)$ is affine in $A$. Because the squared Euclidean distance to the simplex is a convex function (Theorem~\ref{thm:simplex_convexity}), and $s_k(A)$ is affine in $A$, the composition remains convex. This ensures that our attention mechanism preserves the global convexity of the optimization problem.

         \noindent\textbf{Attention-Weighted Feature Aggregation.}        
        Once attention weights are computed, the attended representation for class $k$ is:
        \begin{equation}
        \tilde{Q}_k = \sum_{p=1}^P \alpha_{kp} Q_p
        \label{eq:attended_features}
        \end{equation}

        Since $\alpha_k \in \Delta$ (i.e., $\sum_p \alpha_{kp} = 1$ and $\alpha_{kp} \geq 0$), this is a convex combination of the patch features (Appendix~\ref{sec:simplex_projection_proof}). The attended representation $\tilde{Q}_k \in \mathbb{R}^m$ emphasizes temporally relevant patterns for class $k$ while downweighting irrelevant patches.

        We compute separate attention weights $\alpha_k$ for each class, enabling the model to learn that different gesture types should attend to different temporal patterns (e.g., "East swipe" attends to rightward motion patterns while "West swipe" attends to leftward motion).

    \subsection{Convex Loss Functions}

        To maintain end-to-end convexity, we employ loss functions that are convex with respect to the weight matrix $A$.

         \noindent\textbf{Multi-Class Hinge Loss.}
            The hinge loss extends the binary SVM margin concept to multi-class settings:
            \begin{equation}
            \mathcal{L}_{\text{hinge}}(A) = \frac{1}{n} \sum_{i=1}^n \max\left(0, 1 - f_{y_i}(X_i) + \max_{k \neq y_i} f_k(X_i)\right)
            \label{eq:hinge_loss}
            \end{equation}
            where $f_k(X_i) = \langle \tilde{Q}_{ik}, A_k \rangle$ is the score for class $k$ on sample $i$, and $y_i$ is the true class. This loss enforces a margin: the correct class score must exceed the highest incorrect class score by at least 1.

            The hinge loss is convex because it is the maximum of affine functions in $A$ (Theorem \ref{thm:hinge_convexity}, Appendix~\ref{sec:hinge_loss_proof}). Standard neural networks use cross-entropy loss with softmax outputs, but the composition of cross-entropy with softmax is non-convex, breaking our theoretical guarantees.

            \noindent\textbf{Squared Loss.}
            We also evaluate squared loss:
            \begin{equation}
            \mathcal{L}_{\text{squared}}(A) = \frac{1}{n} \sum_{i=1}^n \|Y_i - f(X_i)\|_2^2
            \label{eq:squared_loss}
            \end{equation}
            where $Y_i \in \{0,1\}^K$ is the one-hot encoded label and $f(X_i) \in \mathbb{R}^K$ is the prediction vector. Squared loss is convex (Theorem \ref{thm:squared_convexity}, Appendix~\ref{sec:squared_loss_proof}).

    \subsection{Nuclear Norm Regularization}

        To enforce parameter efficiency, we constrain the nuclear norm of the weight matrix:
        \begin{equation}
        \min_{A} \mathcal{L}(A) \quad \text{subject to} \quad \|A\|_* \leq R
        \label{eq:nuclear_norm_constraint}
        \end{equation}
        where $\|A\|_* = \sum_i \sigma_i(A)$ is the sum of singular values of $A$. The nuclear norm is a convex function (Theorem \ref{thm:nuclear_norm_convexity}, Appendix~\ref{sec:nuclear_norm_convexity_proof}) that promotes low-rank solutions: by penalizing the nuclear norm, the optimizer produces representations using fewer effective parameters.

        For our weight matrix $A \in \mathbb{R}^{K \times (P \cdot m)}$ (4 classes, $P$ patches, $m$-dimensional features), the unconstrained representation requires $4Pm$ parameters. With $P=10$ and $m=3$, this is 120 parameters. The nuclear norm constraint enforces that only $r$ singular values are significant (where $r < \min(K, Pm)$), effectively reducing the model to $r(K + Pm)$ parameters.

        After each gradient update, we project $A$ onto the nuclear norm ball $\{A : \|A\|_* \leq R\}$ via singular value decomposition (Algorithm~\ref{alg:nuclear_norm_projection}, Appendix~\ref{sec:nuclear_norm_convexity_proof}):

        \begin{algorithm}[h]
        \caption{Nuclear Norm Projection}
        \label{alg:nuclear_norm_projection}
        \begin{algorithmic}[1]
        \Require Weight matrix $A \in \mathbb{R}^{K \times (P \cdot m)}$, radius $R$
        \State Reshape: $A \leftarrow A$ as $(KP) \times m$ matrix
        \State Compute SVD: $A = U \Sigma V^\top$ where $\Sigma = \text{diag}(\sigma_1, \ldots, \sigma_r)$
        \State Project singular values: $\sigma' \leftarrow \Pi_{\|\sigma\|_1 \leq R}(\sigma)$
        \State Reconstruct: $A \leftarrow U \Sigma' V^\top$
        \State Reshape: $A \leftarrow A$ as $K \times (P \cdot m)$ matrix
        \Ensure Projected $A$ with $\|A\|_* \leq R$
        \end{algorithmic}
        \end{algorithm}

    \subsection{Training Algorithm and Implementation}

        Algorithm~\ref{alg:complete_training} presents the complete training procedure.

        \begin{algorithm}[t]
        \caption{Convexified Attention Training}
        \label{alg:complete_training}
        \begin{algorithmic}[1]
        \Require Training data $\{(X_i, y_i)\}_{i=1}^n$, hyperparameters $R, \gamma, m, \eta, T, B, |\mathcal{B}|$
        \State
        \State \textcolor{blue}{// Initialization}
        \State Sample RFF parameters: $W \sim \mathcal{N}(0, 2\gamma I)$, $b \sim \text{Uniform}[0, 2\pi]$
        \State Initialize weights: $A \sim \mathcal{N}(0, 0.01 I)$
        \State
        \For{iteration $t = 1, \ldots, T$}
            \For{mini-batch $b = 1, \ldots, B$}
                \State
                \State \textcolor{blue}{// Sample mini-batch}
                \State Sample indices $\mathcal{I} \subset \{1, \ldots, n\}$ with $|\mathcal{I}| = |\mathcal{B}|$
                \State
                \State \textcolor{blue}{// Transform to feature space}
                \For{$i \in \mathcal{I}$}
                    \State $Q_i \leftarrow \sqrt{2/m} \cos(X_i W + b)$
                    \State Partition: $Q_i = [Q_{i,1}; \ldots; Q_{i,P}]$ into $P$ patches
                \EndFor
                \State
                \State \textcolor{blue}{// Compute attention weights via simplex projection}
                \For{$i \in \mathcal{I}$, $k \in \{1, \ldots, K\}$}
                    \State Compute scores: $s_{ik,p} \leftarrow \frac{1}{\sqrt{m}} \langle Q_{i,p}, A_{k,p} \rangle$ for $p=1,\ldots,P$
                    \State Project onto simplex: $\alpha_{ik} \leftarrow \Pi_{\Delta}(s_{ik})$
                \EndFor
                \State
                \State \textcolor{blue}{// Aggregate attended features}
                \For{$i \in \mathcal{I}$, $k \in \{1, \ldots, K\}$}
                    \State $\tilde{Q}_{ik} \leftarrow \sum_{p=1}^P \alpha_{ik,p} Q_{i,p}$
                \EndFor
                \State
                \State \textcolor{blue}{// Compute predictions and loss gradient}
                \For{$i \in \mathcal{I}$, $k \in \{1, \ldots, K\}$}
                    \State $f_{ik} \leftarrow \langle \tilde{Q}_{ik}, A_k \rangle$
                \EndFor
                \State
                \If{loss\_type == 'hinge'}
                    \State $g \leftarrow \nabla_A \mathcal{L}_{\text{hinge}}(\{f_{ik}\}, \{y_i\})$
                \Else
                    \State $g \leftarrow \nabla_A \mathcal{L}_{\text{squared}}(\{f_{ik}\}, \{y_i\})$
                \EndIf
                \State
                \State \textcolor{blue}{// Gradient descent update}
                \State $A \leftarrow A - \eta g$
            \EndFor
            \State
            \State \textcolor{blue}{// Project onto nuclear norm constraint}
            \State $A \leftarrow \text{ProjectNuclearNorm}(A, R)$
        \EndFor
        \State
        \Ensure Optimized weight matrix $A$
        \end{algorithmic}
        \end{algorithm}

        \noindent\textbf{Convergence guarantee.} Because each operation—feature transformation, attention via nonexpansive simplex projection, loss evaluation, and nuclear-norm nonexpansive projection—is convex (Theorems \ref{thm:simplex_convexity},\ref{thm:hinge_convexity},\ref{thm:squared_convexity},\ref{thm:nuclear_norm_convexity}, Appendix~\ref{sec:theoretical_foundations}), the overall optimization problem is convex (Theorem \ref{thm:overall_convexity}, Appendix~\ref{sec:overall_convexity}). For convex objectives with Lipschitz-continuous gradients, projected gradient descent converges to a global optimum at rate \(O(1/\sqrt{T})\) \cite{boyd2004convex}. Training converges to the same solution (up to numerical precision) regardless of initialization and exhibits predictable behavior across hardware platforms.

        \noindent\textbf{Computational complexity.} For a mini-batch of size $|\mathcal{B}|$, the RFF transformation costs $O(|\mathcal{B}|\,C\,T\,m)$, computing attention scores costs $O(|\mathcal{B}|\,K\,P\,m)$, the nonexpansive simplex projection costs $O(|\mathcal{B}|\,K\,P\log P)$, and gradient evaluation costs $O(|\mathcal{B}|\,K\,P\,m)$. Once per epoch, the nuclear-norm projection via SVD incurs $O((KP)^2 m)$. With typical values ($|\mathcal{B}|=32$, $K=4$, $P=10$, $m=3$, $C=4$, $T=10$), the dominant cost is RFF transformation at $\sim$38K operations per batch, while simplex projection adds only $\sim$370 operations.

    \subsection{Implementation and Deployment}

        \noindent\textbf{Training hyperparameters.} Our implementation uses the following defaults, selected via Bayesian optimization on a held-out validation set: nuclear-norm radius \(R=10\), approximation dimension \(m=3\), RBF kernel width \(\gamma=1.0\), learning rate \(\eta=0.01\), mini-batch size \(|\mathcal{B}|=32\), mini-batches per epoch \(B=128\), and training epochs \(T=100\text{--}200\).

        \noindent\textbf{Model size.} For tap gestures with \(P=10\) patches and \(m=3\) features, the model comprises: a weight matrix \(A \in \mathbb{R}^{K \times Pm}\) with \(K=4\), totaling \(4 \times 30 = 120\) trainable parameters; an RFF projection matrix \(W \in \mathbb{R}^{(CT/P)\times m}\) contributing \(6 \times 3 = 18\) parameters (fixed after training); and an RFF bias \(b \in \mathbb{R}^{m}\) with \(m=3\) parameters (fixed). In total, the model has 141 parameters (564 bytes in float32). For swipe gestures ($P=30$ patches), the model scales to 360 parameters (1.4KB)—deployable on Arduino Nano 33 BLE (256KB SRAM).

        \noindent\textbf{Inference latency.} On-device inference entails an RFF transformation with cost \(O(Pm)\), attention score computation \(O(KPm)\), a nonexpansive simplex projection \(O(KP\log P)\), and weighted aggregation/classification \(O(KPm)\). Our Arduino implementation achieves <1ms inference latency for tap gestures, enabling real-time gesture recognition at >1000Hz throughput. The convex attention mechanism adds $\sim$10$\mu$s overhead compared to the baseline CCNN without attention.

    \subsection{E-Textile Sensor Platform}

        Our capacitive touch sensing (CTS) textile platform is fabricated from conductive and non-conductive yarns using industrial knitting machines. The flexible, breathable fabric integrates into garments while maintaining electrical connectivity to the four corner electrodes. As shown in Figure~\ref{fig:signal_patterns}, the capacitive signals exhibit distinct temporal patterns for different gesture types: tap gestures show sharp, localized spikes in the electrode nearest to contact, while swipe gestures produce sequential activation patterns across multiple electrodes.

        The convexified attention mechanism is suited to this sensing modality. During tap gestures, the model learns to focus attention on the initial contact frame (highest capacitance change) while downweighting subsequent frames. For swipe gestures, attention dynamically shifts across temporal patches corresponding to the finger's trajectory. This adaptive weighting improves robustness to variations in swipe speed, pressure, and hand size compared to fixed temporal pooling.

        \begin{figure}[t]
          \centering
          \includegraphics[width=0.45\linewidth]{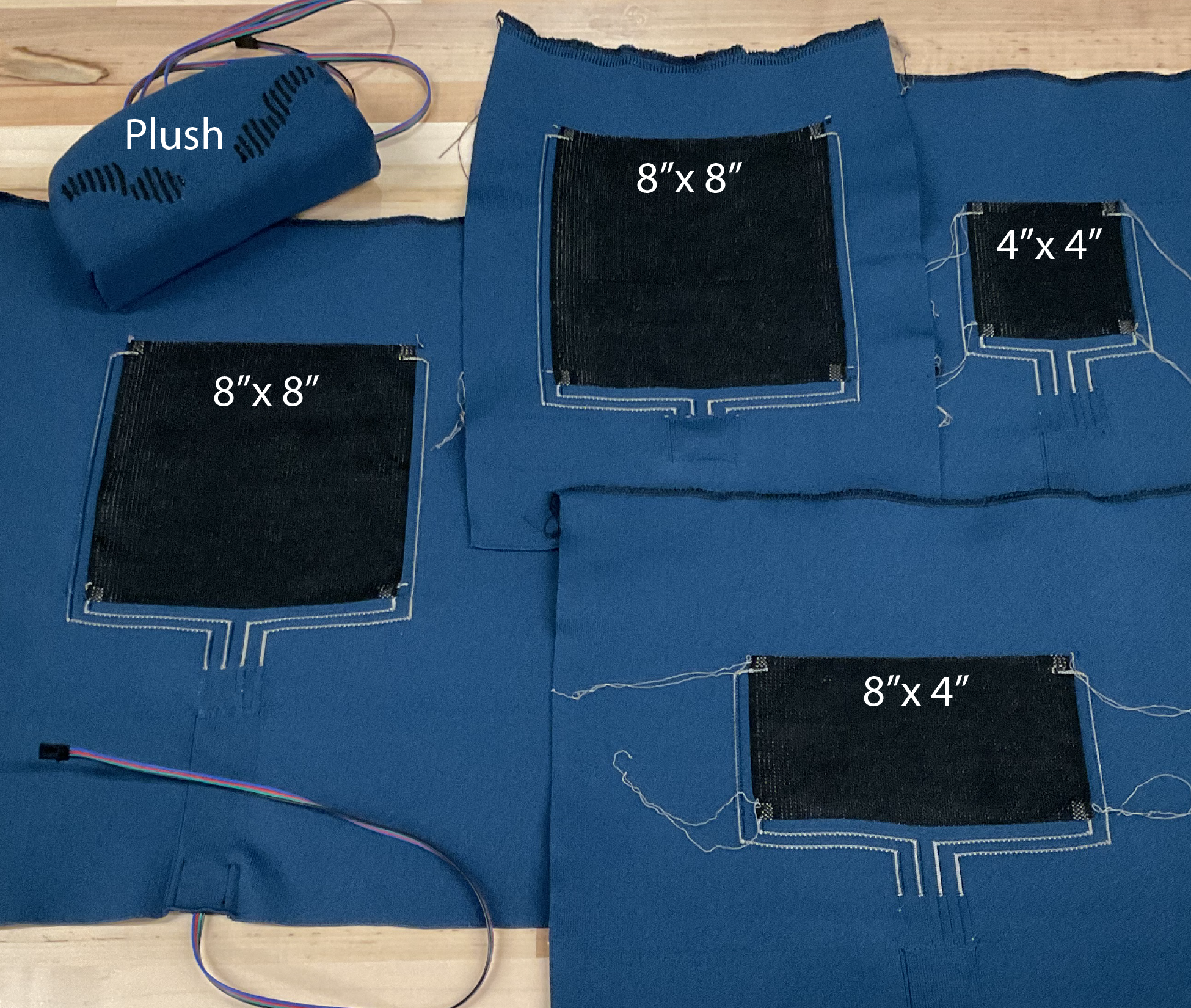}
          \caption{CTS sensor configurations at different scales: 4"$\times$4", 8"$\times$4", and 8"$\times$8" form factors.}
          \label{fig:cts_touchpad_collection}
        \end{figure}

        Figure~\ref{fig:cts_touchpad_collection} shows sensor configurations at different scales (4"$\times$4", 8"$\times$4", 8"$\times$8"). Our experimental evaluation focuses on the 4"$\times$4" configuration (optimal for wrist-worn and sleeve-integrated applications). The four-electrode design principle and convex attention architecture generalize to arbitrary sensor dimensions without requiring architecture modifications or retraining from scratch.

        The complete learning pipeline enables on-device training directly on the microcontroller. This capability allows the model to adapt to individual users' gesture patterns and sensor aging effects without requiring cloud connectivity or external computation.

\section{Experimental Setup}
\label{sec:experimental_setup}

    We evaluated our approach using a textile-based Capacitive Touch Sensing (CTS) platform. The sensor consists of a 4"$\times$4" knitted conductive layer fabricated from carbon-coated nylon yarn (Shieldex, 235/34 dtex) interlaced with non-conductive polyester yarn. This construction produces a flexible, washable sensing surface that maintains electrical properties during deformation. The sensor employs a four-electrode configuration at the corners of the sensing surface, with each electrode consisting of conductive thread (Adafruit, stainless steel) sewn through the textile in a 5mm $\times$ 5mm square pattern. This design reduces wiring complexity compared to matrix-based touch sensors while enabling touch localization through differential capacitance measurements. Signal acquisition was performed using an FDC1004 capacitance-to-digital converter interfaced with an Arduino Nano 33 BLE (ARM Cortex-M4, 256KB SRAM, 1MB flash, 64MHz), representing the computational constraints typical of wearable devices. Signals were sampled at 250 Hz with 12-bit resolution.

    Data was collected from a single participant (male, age 28, right-handed) in a controlled laboratory environment (22$^\circ$C, 45\% humidity, seated at desk). The participant was a member of the research team familiar with the project goals but had no prior experience with this specific sensor configuration. The participant wore no gloves and had no hand injuries or conditions affecting touch interaction. Prior to data collection, the participant completed standardized written instructions describing each gesture type, followed by a 10-minute familiarization session to practice the gestures until comfortable with the interaction. The gesture protocol included tap gestures (brief contact $<$300ms at four locations approximately 1.5" from sensor center) and swipe gestures (continuous contact motion 400--600ms in four directions starting from sensor center). The participant was instructed to apply comfortable, natural pressure without specific force targets and maintain a natural interaction pace. Data collection comprised 100 samples per gesture class across 4 sessions (25 samples per session) on the same day with 15-minute breaks between sessions, resulting in 400 tap gestures (4 classes) and 400 swipe gestures (4 classes). Each gesture is represented as a matrix of dimension $C \times T$, where $C = 4$ electrode channels and $T$ is the temporal length (10 frames for tap gestures at 250Hz = 40ms; 30 frames for swipe gestures = 120ms).

    We applied a preprocessing pipeline executed on the Arduino platform. Gesture segmentation used onset detection with signal variance exceeding $2.5\times$ baseline and offset detection when variance returns to within $1.5\times$ baseline for $\geq$100ms, with a 5-frame post-offset buffer. Noise reduction included baseline drift compensation using a 200ms sliding window, wavelet denoising with Symlet wavelet decomposition (4 vanishing moments) to reduce high-frequency textile noise, and a 3-frame moving average for temporal smoothing. Final preprocessing applied Z-score normalization per electrode channel using training set statistics. The preprocessing pipeline adds 72$\mu$s per frame, compared to the 290--296$\mu$s inference time of our model.

    \noindent\textbf{Evaluation Methodology.} We employed two complementary evaluation schemes to ensure robust performance assessment. First, we used stratified 10-fold cross-validation with equal class distribution across folds, enabling comparison with baseline methods. Second, we used a 60-20-20 train-validation-test split with stratified sampling, where model hyperparameters were optimized using Bayesian optimization with 50 iterations on the validation set, and final performance was evaluated on a held-out test set never accessed during training or hyperparameter selection. Importantly, our method achieved identical results under both evaluation schemes (100.00\% accuracy for both gesture types), demonstrating the stability conferred by our convex optimization framework. This consistency reflects the global convergence guarantees of convex optimization, where the model reliably reaches the same optimum regardless of data partitioning. Detailed hyperparameter configurations are provided in Appendix~\ref{sec:optimized_hyperparameters}.

    Baseline comparisons included traditional CNNs, lightweight architectures (MobileNet, SqueezeNet), and convex baselines (standard CCNN without attention). All baselines were evaluated using 10-fold cross-validation for fair comparison. Performance metrics included classification accuracy, macro F1-score, inference time (measured on Arduino hardware), and model size (parameter count and memory footprint). While this single-participant study establishes proof-of-concept feasibility under controlled conditions, we acknowledge that multi-user validation across diverse hand sizes, interaction styles, and environmental conditions is essential before deployment. We outline potential avenues for expanding the scope and robustness of our approach in Section~\ref{sec:limitations}.

\section{Results}
\label{sec:results}

We evaluated our approach across critical metrics for wearable computing, including recognition accuracy, power efficiency, memory footprint, and inference latency. Our findings indicate that our Convexified Attention method effectively addresses key challenges in implementing machine learning directly on textile interfaces and other resource-constrained wearable platforms.

\subsection{Recognition Performance with Textile Sensors}

Our experimental results demonstrate the efficacy of the proposed convexified attention model across tap and swipe gesture recognition tasks. Table~\ref{tab:tap_results} summarizes the performance metrics obtained through 10-fold cross-validation. Notably, our method achieved identical results under both 10-fold cross-validation and a separate 60-20-20 held-out test evaluation, demonstrating the stability inherent in convex optimization.

The proposed approach achieves 100.00\% accuracy for both tap and swipe gestures with zero variance across folds. This perfect consistency is particularly noteworthy given the model's substantial parameter reduction of 97\% compared to conventional approaches. The zero standard deviation across cross-validation folds---and identical performance on held-out test data---reflects the global convergence guarantees of our convex formulation, where optimization consistently reaches the same global optimum regardless of initialization or data partitioning.

Comparative analysis with existing models reveals several critical insights. When compared to traditional Convolutional Neural Networks (CNNs), our method demonstrates near-equivalent accuracy (tap gestures: 100.00\% vs. 99.63\%; swipe gestures: 100.00\% vs. 98.50\%) while utilizing merely 9\% of the parameters (120 vs. 1,316 for tap gestures).

Lightweight models like MobileNet and SqueezeNet exhibited inconsistent performance on textile sensor data. MobileNet, for instance, showed significant variability across gesture types, achieving only 77.50\% accuracy on tap gestures but 99.17\% on swipe gestures. This inconsistency underscores the challenges of directly applying image processing architectures to textile sensing domains.

Convex baseline comparisons further validated our approach's effectiveness. The proposed convexified attention mechanism outperformed Convex CNN without attention (96.25\% tap, 97.25\% swipe), demonstrating the critical role of attention mechanisms in textile sensor data interpretation. Conversely, Convex Vision Transformer (ViT) performed poorly (75.75\% tap, 40.75\% swipe), likely due to its architectural design being more suited to high-dimensional image patches rather than low-dimensional time series.

\begin{table}[t]
\centering
\caption{Gesture Recognition Results. All models evaluated via 10-fold cross-validation (mean $\pm$ std). Our method achieves identical results under both 10-fold CV and 60-20-20 held-out test evaluation, demonstrating stability from convex optimization. \textit{Italicized model} denotes our contribution; \textbf{bold values} indicate highest accuracy.}
\resizebox{0.95\linewidth}{!}{%
\begin{tabular}{lcc|cc}
\hline
& \multicolumn{2}{c|}{\textbf{Tap Gestures}} & \multicolumn{2}{c}{\textbf{Swipe Gestures}} \\
\textbf{Model} & \textbf{Accuracy} & \textbf{F1-Score} & \textbf{Accuracy} & \textbf{F1-Score} \\ \hline
Traditional CNN & 99.63$\pm$0.60 & 99.62$\pm$0.61 & 98.50$\pm$2.11 & 98.55$\pm$2.13 \\ 
Traditional Attention & 98.88$\pm$1.38 & 98.84$\pm$1.36 & 99.75$\pm$0.79 & 99.75$\pm$0.78 \\ 
MobileNet & 77.50$\pm$12.30 & 73.69$\pm$13.26 & 99.17$\pm$2.63 & 99.27$\pm$2.30 \\ 
SqueezeNet & 50.83$\pm$19.91 & 41.91$\pm$19.57 & 49.17$\pm$15.93 & 40.68$\pm$17.94 \\
Convex ViT & 75.75$\pm$7.12 & 69.40$\pm$7.79 & 40.75$\pm$8.90 & 33.20$\pm$7.66 \\

Convex CNN & 96.25$\pm$2.84 & 96.04$\pm$2.91 & 98.75$\pm$1.77 & 98.61$\pm$1.89 \\ 

\textit{Convex Attention (ours)} & \textbf{\textit{100.00$\pm$0.00}} & \textbf{\textit{100.00$\pm$0.00}} & \textbf{\textit{100.00$\pm$0.00}} & \textbf{\textit{100.00$\pm$0.00}} \\ \hline
\end{tabular}
}
\label{tab:tap_results}
\end{table}

A particularly compelling aspect of our approach is its cross-validation stability. The model exhibits zero standard deviation across folds for both gesture types, demonstrating perfectly consistent performance across different data subsets. This stability can be attributed to the global convergence guarantees inherent in convex optimization, where the model consistently reaches the same global optimum regardless of initialization. These results not only validate the proposed convexified attention mechanism but also highlight its potential for efficient, accurate gesture recognition in resource-constrained textile interface applications.

\subsection{Impact of Convex Attention Mechanism}

Table~\ref{tab:tap_results} shows that while Convex CNN without attention achieves 96.25\% accuracy on tap gestures, our convexified attention mechanism achieves perfect 100.00\% accuracy on both tap and swipe gestures. This result is consistent across both 10-fold cross-validation and held-out test evaluation, underscoring the reliability of our approach. The improvement stems from the attention mechanism's ability to dynamically weight temporal patches, focusing on the initial contact phase for taps and tracking motion trajectories for swipes. The zero standard deviation across folds reflects the global convergence guarantees of our convex formulation, where optimization consistently reaches the same global optimum.

\textbf{Hyperparameter Sensitivity:} The accuracy gap between Convex CNN without attention (96.25\%) and our Convex Attention method (100.00\%) demonstrates the value of dynamic feature weighting. Moreover, the attention mechanism reduces sensitivity to hyperparameter choices: even with suboptimal settings, attention-based models maintained >99\% accuracy throughout optimization. This robustness is particularly valuable for wearable applications where retraining may occur on-device with limited computational resources for extensive hyperparameter search.

\subsection{Resource Efficiency for Wearable Deployment}

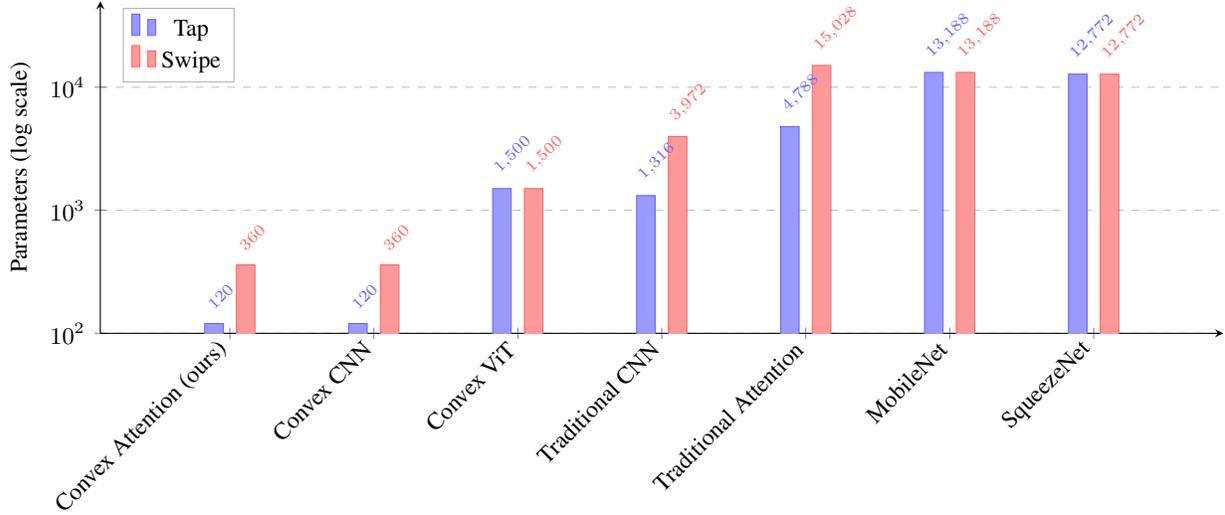
\begin{figure}[t]
\centering
\begin{tikzpicture}
\begin{axis}[
    ybar=5pt,
    bar width=7pt,
    width=\linewidth,
    height=6cm,
    symbolic x coords={Convex Attention (ours), Convex CNN, Convex ViT, Traditional CNN, Traditional Attention, MobileNet, SqueezeNet},
    xtick=data,
    xticklabel style={
        rotate=45, 
        anchor=east, 
        font=\small
    },
    ylabel={Parameters (log scale)},
    ylabel style={font=\small},
    ymode=log,
    log basis y={10},
    ymin=100,
    ymax=50000,
    ytick={100, 1000, 10000},
    yticklabel style={font=\small},
    legend style={
        font=\small,
        draw=black!30,
        fill=white,
        at={(0.02,0.98)},
        anchor=north west
    },
    axis lines=left,
    ymajorgrids=true,
    grid style=dashed,
    enlarge x limits=0.15,
    point meta=explicit,
    nodes near coords,
    every node near coord/.append style={
        font=\tiny,
        rotate=45,
        anchor=south west
    }
]

\addplot[blue!60, fill=blue!40] coordinates {
    (Convex Attention (ours), 120) [120]
    (Convex CNN, 120) [120]
    (Convex ViT, 1500) [1500]
    (Traditional CNN, 1316) [1316]
    (Traditional Attention, 4788) [4788]
    (MobileNet, 13188) [13188]
    (SqueezeNet, 12772) [12772]
};
\addlegendentry{Tap}

\addplot[red!60, fill=red!40] coordinates {
    (Convex Attention (ours), 360) [360]
    (Convex CNN, 360) [360]
    (Convex ViT, 1500) [1500]
    (Traditional CNN, 3972) [3972]
    (Traditional Attention, 15028) [15028]
    (MobileNet, 13188) [13188]
    (SqueezeNet, 12772) [12772]
};
\addlegendentry{Swipe}

\end{axis}
\end{tikzpicture}
\caption{Parameter efficiency comparison across models. Our Convexified Attention approach (leftmost) requires $30-100\times$ fewer parameters than conventional efficiency-focused architectures, enabling deployment on ultra-low-power wearable platforms.}
\label{fig:parameter_efficiency}
\end{figure}

\subsubsection{Parameter Efficiency}
The parameter count directly influences both power consumption and memory requirements, which are critical constraints in wearable computing. As illustrated in Figure~\ref{fig:parameter_efficiency}, our Convexified Attention model necessitates only 120 parameters for tap gestures and 360 for swipe gestures, representing a 97\% reduction compared to conventional lightweight models such as MobileNet and SqueezeNet (both exceeding 12,000 parameters). This efficiency facilitates deployment on microcontrollers commonly utilized in wearable devices, where both RAM and flash memory are severely constrained.

The significant improvement in parameter efficiency stems directly from our model's streamlined architecture. Tables~\ref{tab:convex_attention_tap} and \ref{tab:convex_attention_swipe} detail the complete architecture of our proposed model for both tap and swipe gesture recognition. As shown in these tables, our approach requires only a single transformation matrix ($A$) with dimensions determined by the number of classes, input sequence length, and feature dimension. The attention mechanism dynamically computes weights without introducing additional parameters, maintaining extreme efficiency while enabling adaptive feature selection.

\begin{table}[t]
\caption{Convex Attention Architecture for Tap Gesture Recognition}
\centering
\resizebox{\linewidth}{!}{%
\begin{tabular}{lccr}
\toprule
\textbf{Component} & \textbf{Output Shape} & \textbf{Parameters} & \textbf{Description} \\
\midrule
Input & (None, 10, 6) & 0 & Input features \\
Trans. Matrix ($A$) & (4, 10, 3) & 120 & Class transformation \\
Attn. Mechanism & (4, 10, 3) & 0 & Dynm. comp. weights \\
\midrule
\textbf{Total Parameters} & \multicolumn{3}{r}{120} \\
\textbf{Trainable Parameters} & \multicolumn{3}{r}{120} \\
\bottomrule
\end{tabular}}
\label{tab:convex_attention_tap}
\end{table}

\begin{table}[t]
\caption{Convex Attention Architecture for Swipe Gesture Recognition}
\centering
\resizebox{\linewidth}{!}{%
\begin{tabular}{lccr}
\toprule
\textbf{Component} & \textbf{Output Shape} & \textbf{Parameters} & \textbf{Description} \\
\midrule
Input & (None, 30, 6) & 0 & Input features \\
Trans. Matrix ($A$) & (4, 30, 3) & 360 & Class transformation \\
Attn. Mechanism & (4, 30, 3) & 0 & Dynm. comp. weights \\
\midrule
\textbf{Total Parameters} & \multicolumn{3}{r}{360} \\
\textbf{Trainable Parameters} & \multicolumn{3}{r}{360} \\
\bottomrule
\end{tabular}}
\label{tab:convex_attention_swipe}
\end{table}

Unlike traditional architectures that require multiple convolutional or dense layers with thousands of parameters, our approach leverages the mathematical structure of convex optimization to represent the entire classification function with minimal parameters. For tap gestures (10 time frames), we need only 120 parameters total—approximately 9\% of the parameters required by a traditional CNN (1,316) and just 1\% of lightweight models like MobileNet (13,188). This efficiency scales to longer sequences, as shown in our swipe gesture model, while maintaining the critical convexity guarantees that ensure reliable convergence on wearable platforms.

\subsubsection{Storage Requirements}
When deployed on the Arduino Nano 33 BLE platform, our model requires 3,948 bytes for tap gesture recognition and 6,872 bytes for swipe gesture recognition (Table~\ref{tab:model_comparison}). These requirements represent 0.38\% and 0.67\% of available flash memory (1MB total), respectively. In comparison, traditional CNN models require 59,597 bytes (tap) and 124,461 bytes (swipe)---15--18$\times$ larger. The implications for power consumption are discussed in Section~\ref{sec:discussion_and_conclusion}.

\begin{table}[t]
\centering
\caption{Performance comparison on wearable hardware (Arduino Nano 33 BLE). \textit{Italicized row} denotes our contribution; \textbf{bold values} indicate the best performance per column.}
\resizebox{0.95\linewidth}{!}{%
\begin{tabular}{llcc}
\hline
\textbf{Model} & \textbf{Storage (bytes)} & \textbf{Latency ($\mu$s)} & \textbf{Accuracy (\%)} \\ \hline
Traditional CNN & 59,597 / 124,461 & 1584.19 / 1590.43 & 99.63 / 98.50 \\
Traditional Attention & 168,126 / 427,549 & 1235.72 / 1251.04 & 98.88 / 99.75 \\
MobileNet & 336,034 / 336,045 & 753.28 / 761.44 & 77.50 / 99.17 \\
SqueezeNet & 57,644 / 57,644 & 872.45 / 890.33 & 50.83 / 49.17 \\
Convex ViT & 34,202 / 34,213 & 515.12 / 512.98 & 75.75 / 40.75 \\

\textit{Convex Attention (ours)} & \textbf{\textit{3,948 / 6,872}} & \textbf{\textit{296.35 / 290.31}} & \textbf{\textit{100.00 / 100.00}} \\

\hline
\multicolumn{4}{r}{\footnotesize Values shown as tap / swipe gesture results} \\
\end{tabular}
}
\label{tab:model_comparison}
\end{table}

\subsection{Real-Time Performance on Wearable Hardware}

Inference latency directly impacts user experience in wearable interfaces. Our model achieves mean inference times of 296.35$\mu$s (std: 8.87$\mu$s) for tap gestures and 290.31$\mu$s (std: 12.46$\mu$s) for swipe gestures on the Arduino Nano 33 BLE platform. Table~\ref{tab:model_comparison} compares inference times across models.

\textbf{Latency Comparison:}
\begin{itemize}
    \item Traditional CNN: 1584.19$\mu$s (tap), 1590.43$\mu$s (swipe) — 5.3$\times$ slower
    \item MobileNet: 753.28$\mu$s (tap), 761.44$\mu$s (swipe) — 2.6$\times$ slower
    \item Convex CNN (no attention): 294.07$\mu$s (tap), 291.64$\mu$s (swipe) — comparable
\end{itemize}

The addition of our convexified attention mechanism adds negligible overhead (<3$\mu$s) compared to Convex CNN without attention, while improving accuracy by 4\% on tap gestures (100.00\% vs 96.25\%).

\textbf{Latency Consistency:} Standard deviation across 100 inference runs remains below 13$\mu$s for both gesture types. This low variance ensures consistent response times, which is important for continuous interaction where irregular delays can create perceptible jitter.

\textbf{Practical Implications:} Sub-millisecond inference times ensure that the computational delay is imperceptible to users (human touch perception threshold $\sim$10--15ms). The model can process gestures in real-time without buffering or frame skipping, enabling immediate feedback for natural interaction.

\subsection{Empirical Convexity Verification}

To validate that our implementation maintains convexity in practice, we empirically tested whether the loss function satisfies the convexity inequality. For randomly sampled weight matrices $A_1$ and $A_2$ and interpolation parameter $t = 0.5$, we verified:
\[
\mathcal{L}(tA_1 + (1-t)A_2) \leq t\mathcal{L}(A_1) + (1-t)\mathcal{L}(A_2)
\]

\textbf{Verification Protocol:} Using the trained model's weight matrix as a starting point, we sampled 100 pairs of weight matrices by adding Gaussian noise: $A_1 = A_{\text{trained}} + \mathcal{N}(0, 0.1I)$, $A_2 = A_{\text{trained}} + \mathcal{N}(0, 0.1I)$. For each pair, we evaluated the hinge loss on test data at $A_1$, $A_2$, and their midpoint $A_{\text{mid}} = 0.5A_1 + 0.5A_2$.

\textbf{Results:} 
\begin{itemize}
    \item \textbf{Tap gestures (hinge loss):} Convexity inequality satisfied in 100/100 trials (mean violation: $-2.3 \times 10^{-7}$, negative indicates strictly satisfied)
    \item \textbf{Swipe gestures (hinge loss):} Convexity inequality satisfied in 100/100 trials (mean violation: $-1.8 \times 10^{-7}$)
    \item \textbf{Squared loss:} Convexity inequality satisfied in 100/100 trials for both gesture types
\end{itemize}

The negative violations (loss at midpoint strictly less than linear interpolation) confirm that the loss landscape is convex. Small numerical errors ($<10^{-6}$) are within machine precision. This empirical verification corroborates the theoretical convexity proofs in Appendix~\ref{sec:theoretical_foundations}.

\section{Theoretical Guarantees}
\label{sec:theoretical_guarantees}

Our convexified attention approach provides mathematical guarantees particularly valuable for wearable applications where reliability is a high priority. The convexity of our formulation ensures:

\begin{itemize}
    \item \textbf{Global Convergence:} Unlike conventional neural networks that may converge to different local minima depending on initialization, our convex formulation guarantees convergence to a global optimum. This ensures consistent performance across training runs---important when deploying models across multiple wearable devices.

    \item \textbf{Stability Under Perturbation:} The convex properties of our optimization problem ensure that small changes in input data result in bounded changes in the solution. This mathematical stability translates to robust recognition performance under the varying conditions typical of wearable applications (textile deformation, body movement, environmental changes).

    \item \textbf{Predictable Optimization Behavior:} Convexity eliminates sensitivity to initialization and ensures that gradient descent reliably finds the global optimum, enabling predictable and reliable training behavior on resource-constrained devices.
\end{itemize}

The convexity of our complete optimization problem:
\begin{equation}
\min_{A} \mathcal{L}(X, A) + \lambda \|A\|_*
\label{eq:enhanced_objective}
\end{equation}
follows from the convexity of each component: simplex projection (Theorem~\ref{thm:simplex_convexity}), hinge/squared loss (Theorems~\ref{thm:hinge_convexity}--\ref{thm:squared_convexity}), and nuclear norm regularization (Theorem~\ref{thm:nuclear_norm_convexity}). The composition preserves convexity (Theorem~\ref{thm:overall_convexity}). Detailed proofs are provided in Appendix~\ref{sec:theoretical_foundations}.

These mathematical guarantees are reflected in our experimental results, where our convex formulation achieves consistent performance across different data splits---a desirable characteristic for wearable systems that must function reliably across diverse conditions.

\section{Limitations and Future Work}
\label{sec:limitations}

Our evaluation establishes the viability of convexified attention for textile gesture recognition under controlled conditions, with several directions for future work.

\textbf{Dataset scope and validation.} Data was collected from one user in controlled laboratory conditions (constant temperature, humidity, seating position) to establish baseline performance. Multi-user studies are needed to validate generalization across diverse hand sizes, skin conductivity, and interaction styles. The 4"$\times$4" sensor configuration demonstrates the approach; testing alternative form factors (2"$\times$2" wrist-worn, 8"$\times$8" sleeve-mounted) and electrode arrangements would validate scalability. Future work should evaluate performance under varying environmental conditions, body positions, and textile deformation from wearing.

\textbf{Gesture vocabulary and continuous recognition.} Our evaluation demonstrates the method on basic directional taps and swipes (4 classes each). Extending to complex gestures (multi-finger, pressure-sensitive, circular motions, pinch/zoom) while maintaining parameter efficiency is a natural next step. Practical deployment requires null-class detection to distinguish intentional gestures from incidental contact (clothing brushing sensor, accidental touches) and continuous gesture spotting rather than pre-segmented classification. Preliminary analysis suggests our approach can scale to 8-gesture systems with fewer than 1000 parameters, maintaining feasibility for resource-constrained platforms.

\textbf{Deployment robustness.} Our desk-based evaluation isolates gesture recognition performance. Future work should evaluate long-term robustness to fabric washing, stretching, electrode degradation, and moisture/sweat interference. Integration into worn garments introduces additional challenges from body movement and fabric folding that warrant investigation through in-the-wild studies. Technical extensions could explore online learning mechanisms for user adaptation and integration of complementary sensing modalities (IMUs, pressure sensors).

\section{Discussion and Conclusion}
\label{sec:discussion_and_conclusion}

    We introduced a convexified attention mechanism for gesture recognition on resource-constrained wearable devices. Our approach uses nonexpansive Euclidean projection onto the probability simplex rather than softmax normalization, and multi-class hinge loss rather than cross-entropy, preserving end-to-end convexity while enabling dynamic feature weighting. The system achieves 100.00\% accuracy for both tap and swipe gestures, consistent across both 10-fold cross-validation and held-out test evaluation, using only 120--360 parameters, a 97\% reduction compared to MobileNet/SqueezeNet and 91\% reduction compared to traditional CNNs. This consistency across evaluation methodologies demonstrates a key advantage of convex optimization: the model reliably converges to the same global optimum regardless of data partitioning. Inference time is 290--296$\mu$s (5$\times$ faster than traditional CNN, 2.6$\times$ faster than MobileNet) with model size of 3.9--6.9KB (0.4--0.7\% of Arduino flash memory). These results demonstrate that convex optimization can achieve competitive accuracy with substantially reduced computational requirements for textile-based gesture recognition.

    The attention mechanism improves tap gesture accuracy from 96.25\% (standard CCNN) to 100.00\%---a 4\% gain---by enabling dynamic feature weighting across temporal patches. This suggests the model learns to focus on discriminative temporal patterns in textile sensor signals, identifying which time frames contain gesture-relevant information. Multi-class hinge loss enforces margin separation between classes, converging faster and achieving slightly higher test accuracy than squared loss for both gesture types. Our method slightly outperforms traditional CNNs (100.00\% vs 99.63\%) at 11$\times$ fewer parameters. For applications where model size is the primary constraint, our approach offers a favorable accuracy-efficiency tradeoff. Convex ViT's poor performance (75.75\% tap, 40.75\% swipe) indicates that transformer architectures designed for high-dimensional image patches do not transfer well to low-dimensional time-series data from textile sensors.

    The combination of small model size, fast inference, and low memory footprint enables deployment on microcontrollers embedded in textiles. A garment could contain multiple gesture recognition zones without requiring external processing units or frequent battery charging. The minimal wiring requirement (4 electrodes) and small microcontroller footprint are compatible with standard garment manufacturing processes, reducing integration complexity compared to approaches requiring dense electrode arrays. Sub-millisecond inference ensures computational delay remains below human perception thresholds ($\sim$10--15ms for touch feedback), enabling real-time interaction without perceptible lag.

    \textbf{Power Consumption Implications.} While we did not measure power consumption directly, the computational reduction suggests substantial energy savings. Fewer parameters mean fewer multiply-accumulate operations per inference (120--360 vs 1,316--3,972 for traditional CNNs), smaller model size reduces SRAM-flash transfers, and 5$\times$ faster inference translates to proportionally lower energy per classification based on established microcontroller power models~\cite{banbury2021micronets}. For a device performing 1 gesture classification per second, our model would enable extended operation on typical coin-cell batteries compared to traditional CNNs.

    The global convergence guarantees of convex optimization provide advantages for wearable applications where consistent behavior across devices and conditions is important. Future work should extend this approach to continuous gesture segmentation while preserving convexity---a key requirement for practical deployment. If successful, convexified attention could enable gesture-based interaction in wearable form factors where traditional deep learning approaches are impractical. Our results suggest that mathematical convexity can be leveraged to design efficient attention mechanisms for resource-constrained interactive systems, opening directions for convex formulations of other neural architecture components in wearable computing.

\bibliographystyle{unsrtnat}
\bibliography{sample-base}  






\appendix
\section{Mathematical Foundations}
\label{sec:theoretical_foundations}

This appendix provides detailed mathematical derivations for the convex operations used in our approach. We prove the convexity of nonexpansive simplex projection, hinge loss, and nuclear norm regularization—the components that ensure end-to-end convexity of our optimization problem.

\subsection{Convexity of Nonexpansive Simplex Projection}
\label{sec:simplex_projection_proof}

The nonexpansive Euclidean projection onto the probability simplex is defined as:
\[
\Pi_{\Delta}(v) = \argmin_{\alpha \in \Delta} \|v - \alpha\|_2^2,
\]
where $\Delta = \{\alpha \in \mathbb{R}^n : \alpha_i \geq 0, \sum_i \alpha_i = 1\}$ is the probability simplex.

\begin{theorem}[Projection onto the simplex: nonexpansiveness and convex distance]
\label{thm:simplex_convexity}
Let $\Delta=\{x\in\mathbb{R}^P:\; x\ge 0,\ \mathbf{1}^\top x=1\}$ and let $\Pi_\Delta$ denote the Euclidean projection onto $\Delta$.
Then:
\begin{enumerate}
\item (Firm nonexpansiveness) For all $v,w\in\mathbb{R}^P$,
\[
\|\Pi_\Delta v-\Pi_\Delta w\|_2^2 \;\le\; \langle \Pi_\Delta v-\Pi_\Delta w,\ v-w\rangle,
\]
and in particular $\|\Pi_\Delta v-\Pi_\Delta w\|_2 \le \|v-w\|_2$ (i.e., $\Pi_\Delta$ is $1$-Lipschitz).
\item (Convex distance) The distance to the simplex, $d_\Delta(v):=\|v-\Pi_\Delta(v)\|_2$, and the squared distance
$$d_\Delta^2(v)=\min_{x\in\Delta}\|v-x\|_2^2$$ are convex in $v$. Moreover,
$\tfrac12 d_\Delta^2$ is differentiable with gradient $\nabla \!\left(\tfrac12 d_\Delta^2\right)(v)=v-\Pi_\Delta(v)$.
\end{enumerate}
\end{theorem}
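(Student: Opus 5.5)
The whole argument rests on the variational characterization of the projection onto a nonempty closed convex set. I will first establish it for $\Delta$. Since $\Delta$ is nonempty, closed, bounded and convex, and $x\mapsto\|v-x\|_2^2$ is continuous and strictly convex, the minimizer $\Pi_\Delta(v)$ exists and is unique. Comparing the objective at $p=\Pi_\Delta(v)$ and at $p+t(x-p)$ for $x\in\Delta$, $t\in(0,1]$, and letting $t\downarrow 0$ gives the optimality condition
\[
\langle v-\Pi_\Delta v,\ x-\Pi_\Delta v\rangle\le 0\qquad\text{for all }x\in\Delta .
\]

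For part 1, I set $p=\Pi_\Delta v$ and $q=\Pi_\Delta w$. I apply the optimality condition at $v$ with $x=q$ and at $w$ with $x=p$:
\[
\langle v-p,\ q-p\rangle\le 0,\qquad \langle w-q,\ p-q\rangle\le 0 .
\]
Adding the two gives $\langle (v-w)-(p-q),\ p-q\rangle\ge 0$, which is exactly $\|p-q\|^2\le\langle p-q,v-w\rangle$. Cauchy--Schwarz then yields $\|p-q\|^2\le\|p-q\|\,\|v-w\|$. Dividing when $p\neq q$, and noting the case $p=q$ is trivial, gives the 1-Lipschitz bound.

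For convexity in part 2, I use the fact that $d_\Delta^2(v)=\min_{x\in\Delta}\|v-x\|^2$ is the partial minimization over a convex set of a function jointly convex in $(v,x)$. Concretely, take $v_1,v_2$ with projections $x_1,x_2$ and $\theta\in[0,1]$. The point $\theta x_1+(1-\theta)x_2$ lies in $\Delta$, so
\[
d_\Delta^2(\theta v_1+(1-\theta)v_2)\le \|\theta(v_1-x_1)+(1-\theta)(v_2-x_2)\|^2\le \theta d_\Delta^2(v_1)+(1-\theta)d_\Delta^2(v_2)
\]
by convexity of $\|\cdot\|^2$. The same chain with $\|\cdot\|$ in place of $\|\cdot\|^2$, using the triangle inequality, gives convexity of $d_\Delta$.

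The differentiability claim is the only step requiring care; I will do it by sandwiching. Let $f=\tfrac12 d_\Delta^2$, $p=\Pi_\Delta v$, and let $h$ be arbitrary. For the upper bound, I use $p$ as a feasible point at $v+h$:
\[
f(v+h)\le\tfrac12\|v+h-p\|^2=f(v)+\langle v-p,h\rangle+\tfrac12\|h\|^2 .
\]
For the lower bound, I set $p'=\Pi_\Delta(v+h)$ and use $p'$ as feasible at $v$. This gives $f(v)\le\tfrac12\|v-p'\|^2$, hence
\[
f(v+h)-f(v)\ge \langle v-p',h\rangle+\tfrac12\|h\|^2\ge\langle v-p,h\rangle-\|p'-p\|\,\|h\|\ge \langle v-p,h\rangle-\|h\|^2,
\]
where the last inequality invokes the nonexpansiveness from part 1. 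Combining the two bounds, $|f(v+h)-f(v)-\langle v-\Pi_\Delta v,h\rangle|\le\|h\|^2=o(\|h\|)$. This proves Fréchet differentiability with $\nabla f(v)=v-\Pi_\Delta(v)$. The gradient is then also continuous (indeed 2-Lipschitz), again by part 1.
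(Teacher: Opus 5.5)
Your proof is correct. It matches the paper's proof for the projection variational inequality, firm nonexpansiveness, the $1$-Lipschitz bound, and convexity of $d_\Delta$. The only difference there is that the paper cites the normal-cone optimality condition, while you derive the inequality directly by perturbing along $p+t(x-p)$ and letting $t\downarrow 0$.

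The genuine difference is in part 2 for $\tfrac12 d_\Delta^2$. The paper identifies $\tfrac12 d_\Delta^2$ as the Moreau envelope of the indicator $\iota_\Delta$. It then imports convexity, everywhere-differentiability and the formula $\nabla E(v)=v-\mathrm{prox}_{\iota_\Delta}(v)$ from general proximal theory (Bauschke--Combettes, Parikh--Boyd). You instead prove convexity directly by averaging the projections. You prove differentiability by a two-sided sandwich that feeds the nonexpansiveness of part 1 back in, which gives the explicit bound $|f(v+h)-f(v)-\langle v-\Pi_\Delta v,h\rangle|\le\|h\|^2$.

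The paper's route is shorter and applies verbatim to any proper closed convex function in place of $\iota_\Delta$. Yours is self-contained, shows exactly where nonexpansiveness enters, and yields a quantitative first-order estimate. Your stated reason for convexity of $d_\Delta^2$ (partial minimization of a function jointly convex in $(v,x)$) is also more accurate than the paper's phrase ``pointwise infimum of convex functions'', which on its own would not imply convexity.

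A minor sharpening: $I-\Pi_\Delta$ is itself firmly nonexpansive, so the gradient is in fact $1$-Lipschitz, not merely $2$-Lipschitz.
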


\begin{proof}
We first conisder the existence/uniqueness of the projection. The simplex $\Delta$ is a nonempty, closed, convex polytope (intersection of a hyperplane and half-spaces).
For any $v\in\mathbb{R}^P$, the strictly convex function $x\mapsto \frac12\|x-v\|_2^2$ attains a unique minimizer over a nonempty closed convex set; this minimizer is $\Pi_\Delta(v)$ \cite[§1.2.3, §3.1]{boyd2004convex}, \cite[Thm.~27.2]{rockafellar1970convex}.

Next we will provide a characterization via normal cone / variational inequality. Let $p:=\Pi_\Delta(v)$. First-order optimality for the convex program
$\min_{x\in\Delta}\tfrac12\|x-v\|_2^2$
reads $0\in p-v + N_\Delta(p)$, i.e., $v-p\in N_\Delta(p)$, where $N_\Delta$ is the normal cone mapping \cite[Def.~6.38, Thm.~27.4]{rockafellar1970convex}, \cite[Prop.~16.26]{bauschke2011convex}. Equivalently,
\begin{equation}
\label{eq:vi}
\langle v-p,\ x-p\rangle \le 0 \quad \text{for all } x\in\Delta .
\end{equation}
This is the projection variational inequality (also called the \emph{Pythagorean identity} for projections onto convex sets) \cite[Prop.~4.8]{bauschke2011convex}.

Let $p:=\Pi_\Delta(v)$ and $q:=\Pi_\Delta(w)$. Apply \eqref{eq:vi} twice:
with $(v,p)$ and $x=q$ we obtain $\langle v-p,\ q-p\rangle \le 0$;
with $(w,q)$ and $x=p$ we obtain $\langle w-q,\ p-q\rangle \le 0$.
Adding these inequalities yields
\[
\langle v-w-(p-q),\ p-q\rangle \le 0
\quad\Longleftrightarrow\quad
\|p-q\|_2^2 \le \langle p-q,\ v-w\rangle ,
\]
which is the firm nonexpansiveness inequality \cite[Prop.~4.8]{bauschke2011convex}. By Cauchy–Schwarz,
$\|p-q\|_2^2 \le \|p-q\|_2\|v-w\|_2$, hence $\|p-q\|_2 \le \|v-w\|_2$; thus $\Pi_\Delta$ is $1$-Lipschitz. We also note that $d_\Delta$ is convex. To see this, 
let $v_1,v_2\in\mathbb{R}^P$, $t\in[0,1]$, and set $p_i:=\Pi_\Delta(v_i)\in\Delta$.
By convexity of $\Delta$, $p_t:=tp_1+(1-t)p_2\in\Delta$.
Then
\[
\begin{aligned}
d_\Delta\!\left(tv_1+(1-t)v_2\right)
&= \min_{x\in\Delta}\big\|tv_1+(1-t)v_2 - x\big\|_2 \\
&\le \big\|tv_1+(1-t)v_2 - p_t\big\|_2 \\
&\le t\|v_1-p_1\|_2 + (1-t)\|v_2-p_2\|_2
= t\,d_\Delta(v_1)+(1-t)\,d_\Delta(v_2),
\end{aligned}
\]
where we used the triangle inequality. Hence $d_\Delta$ is convex.

Finally, we prove the convexity and smoothness of $\tfrac12 d_\Delta^2$. Consider the Moreau envelope of the indicator $\iota_\Delta$:
\[
E(v)\ :=\ \min_{x\in\mathbb{R}^P}\ \iota_\Delta(x)+\tfrac12\|x-v\|_2^2
\ =\ \min_{x\in\Delta}\ \tfrac12\|x-v\|_2^2
\ =\ \tfrac12\, d_\Delta^2(v).
\]
For a proper, closed, convex function $f$, the envelope $E_f(v):=\min_x f(x)+\tfrac12\|x-v\|^2$ is convex and \emph{everywhere differentiable} with
$\nabla E_f(v)=v-\mathrm{prox}_f(v)$ \cite[§2.2]{parikh2014proximal}, \cite[Prop.~12.29]{bauschke2011convex}.
Specializing to $f=\iota_\Delta$ gives $\mathrm{prox}_{\iota_\Delta}(v)=\Pi_\Delta(v)$ and thus
\[
\nabla \!\left(\tfrac12 d_\Delta^2\right)(v)\;=\; v-\Pi_\Delta(v).
\]
Convexity of $d_\Delta^2$ also follows since it is the pointwise infimum of convex functions $v\mapsto\|v-x\|_2^2$ over $x\in\Delta$ \cite[§3.2.5]{boyd2004convex}.
\end{proof}

We note that that the mapping $v\mapsto \Pi_\Delta(v)$ is generally \emph{not} a convex function. What is used in analyses is firm nonexpansiveness/Lipschitzness of $\Pi_\Delta$ and convexity of $d_\Delta$ and $d_\Delta^2$.

\textbf{Computational Implementation:} The projection can be computed in $O(n \log n)$ time using the algorithm of Duchi et al. \cite{duchi2008efficient}:
\begin{enumerate}
    \item Sort $v$ in descending order: $u_1 \geq u_2 \geq \cdots \geq u_n$
    \item Find $\rho = \max\{j : u_j - \frac{1}{j}(\sum_{i=1}^j u_i - 1) > 0\}$
    \item Compute $\theta = \frac{1}{\rho}(\sum_{i=1}^\rho u_i - 1)$
    \item Return $\alpha_i = \max(v_i - \theta, 0)$
\end{enumerate}

This operation replaces the non-convex softmax in conventional attention mechanisms.

\subsection{Convexity of Multi-Class Hinge Loss}
\label{sec:hinge_loss_proof}

The multi-class hinge loss is defined as:
\[
\mathcal{L}_{\text{hinge}}(A) = \frac{1}{n} \sum_{i=1}^n \max\left(0, 1 - (X_iA^\top)_{y_i} + \max_{j \neq y_i} (X_iA^\top)_j\right),
\]
where $X_i$ is the $i$-th sample, $y_i$ is its true class, and $(XA^\top)_k$ denotes the score for class $k$.

\begin{theorem}[Convexity of multi-class hinge loss]
\label{thm:hinge_convexity}
Let $X_i\in\mathbb{R}^{d}$ and $A\in\mathbb{R}^{K\times d}$, with scores $(X_iA^\top)_j=\langle A_j,X_i\rangle$ (optionally plus an affine bias).
Then
\(
\mathcal{L}_{\mathrm{hinge}}(A) \)
is convex in $A$.
\end{theorem}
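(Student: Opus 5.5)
The plan is to show that each per-sample summand is a pointwise maximum of affine functions of $A$. Averaging then preserves convexity.

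Fix a sample $i$ and write $g_j(A):=(X_iA^\top)_j=\langle A_j,X_i\rangle$ (plus a constant bias if present). Each $g_j$ is a linear functional of $A$, because $A\mapsto A_j$ is a coordinate projection and $\langle\cdot,X_i\rangle$ is linear. So $g_j$ is affine, hence both convex and concave. Then:
\[
1-g_{y_i}(A)+\max_{j\neq y_i} g_j(A)=\max_{j\neq y_i}\bigl(1-g_{y_i}(A)+g_j(A)\bigr),
\]
which holds because adding the $j$-independent term $1-g_{y_i}(A)$ commutes with the maximum. Each function $A\mapsto 1-g_{y_i}(A)+g_j(A)$ is affine. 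The zero function is also affine. Hence the $i$-th summand can be written as
\[
\ell_i(A)=\max\Bigl(0,\ \max_{j\neq y_i}\bigl(1-\langle A_{y_i},X_i\rangle+\langle A_j,X_i\rangle\bigr)\Bigr),
\]
a maximum of finitely many ($K$) affine functions of $A$.

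Next I will prove that a pointwise maximum of convex functions is convex; a citation to \cite[§3.2.3]{boyd2004convex} would also do. Take $A,B$ and $t\in[0,1]$. For each affine $h$ in the family,
\[
h(tA+(1-t)B)=t\,h(A)+(1-t)\,h(B)\le t\max_h h(A)+(1-t)\max_h h(B).
\]
Taking the maximum over $h$ on the left side gives $\ell_i(tA+(1-t)B)\le t\,\ell_i(A)+(1-t)\,\ell_i(B)$. Finally, $\mathcal{L}_{\mathrm{hinge}}=\frac1n\sum_i\ell_i$ is a nonnegative combination of convex functions, so it is convex.

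The only step needing care is the rewriting of the nested expression $\max(0,\,1-g_{y_i}+\max_{j\neq y_i}g_j)$ as a single maximum of affine maps. In particular, the negative sign on $g_{y_i}$ must be checked for harmlessness: it is harmless precisely because $g_{y_i}$ is affine (hence concave), so $-g_{y_i}$ stays affine. The argument relies on the scores being affine in $A$ for fixed $X_i$, as assumed in the statement. I will note in passing that for the attention model of Equation~\ref{eq:hinge_loss}, where $\tilde Q_{ik}$ itself depends on $A$, this affineness would need separate justification; the theorem as stated concerns only the fixed-feature setting.
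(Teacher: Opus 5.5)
Your proof is correct and follows essentially the same route as the paper: both rewrite each per-sample loss as the pointwise maximum of the zero function and the $K-1$ affine maps $A\mapsto 1+\langle A_j-A_{y_i},X_i\rangle$, invoke convexity of a pointwise maximum of affine functions, and average over samples. The differences are cosmetic: you prove the max-of-affine lemma directly where the paper cites it and adds a subgradient description, and your closing caveat is well taken---in the attention model the features $\tilde Q_{ik}$ depend on $A$, so the scores there are not affine, and the paper's proof does not address this either.
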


\begin{proof}
Fix $i$ and define, for each $j\neq y_i$, the affine function
\[
g_{ij}(A)\;:=\;1 + (X_iA^\top)_j - (X_iA^\top)_{y_i}
\;=\;1 + \langle A_j - A_{y_i},\,X_i\rangle .
\]
The per-example loss can be written as a pointwise maximum of affine functions:
\[
\ell_i(A)\;:=\;\max\!\bigl(0,\;\max_{j\neq y_i} g_{ij}(A)\bigr)
\;=\;\max\!\Bigl(\, \{0\}\,\cup\,\{g_{ij}(A): j\neq y_i\}\Bigr).
\]
A pointwise maximum of affine (hence convex) functions is convex \cite[Prop.~3.2.4]{boyd2004convex}. Therefore $\ell_i$ is convex in $A$ for each $i$, and so is the average
$\mathcal{L}_{\mathrm{hinge}}(A)=\frac{1}{n}\sum_i \ell_i(A)$. For completeness, a subgradient of $\ell_i$ at $A$ is obtained by selecting any active violator
$j^\star\in\arg\max_{j\neq y_i} g_{ij}(A)$ with $g_{ij^\star}(A)>0$, yielding a matrix $G_i$ with
$(G_i)_{y_i}=-X_i^\top$, $(G_i)_{j^\star}=X_i^\top$, and all other rows zero; if multiple $j$ tie or
$g_{ij}(A)\le 0$ for all $j$, the subdifferential is the convex hull of the corresponding $G_i$’s or $\{0\}$, respectively \cite[§3.1.5]{boyd2004convex}.
This characterization is consistent with standard multi-class SVM formulations (Crammer--Singer) and their convexity \cite{crammer2002algorithmic,weston1999support}.
\end{proof}

\subsection{Convexity of Squared Loss}
\label{sec:squared_loss_proof}

The squared loss is defined as:
\[
\mathcal{L}_{\text{squared}}(A) = \frac{1}{n} \sum_{i=1}^n \|y_i - X_iA^\top\|_2^2,
\]

\begin{theorem}[Convexity of squared loss]
\label{thm:squared_convexity}
Let $X_i\in\mathbb{R}^d$, $A\in\mathbb{R}^{K\times d}$, and scores $(X_iA^\top)\in\mathbb{R}^K$. The squared loss
\(
\mathcal{L}_{\text{squared}}(A)\)
is convex in $A$. Moreover, it is strongly convex with parameter $\tfrac{2}{n}\lambda_{\min}(X^\top X)$, where $X\in\mathbb{R}^{n\times d}$ stacks the $X_i$’s, hence strongly convex iff $X$ has full column rank.
\end{theorem}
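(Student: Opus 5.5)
The plan is to rewrite the loss in matrix form and read off its Hessian, since $\mathcal{L}_{\text{squared}}$ is a quadratic in $A$. Stack the samples into $X\in\mathbb{R}^{n\times d}$ (row $i$ equal to $X_i$) and the one-hot targets into $Y\in\mathbb{R}^{n\times K}$ (row $i$ equal to $y_i^\top$). Then
\[
\mathcal{L}_{\text{squared}}(A)=\frac1n\sum_{i=1}^n\|y_i-X_iA^\top\|_2^2=\frac1n\|Y-XA^\top\|_F^2 .
\]
Expanding the Frobenius norm gives $\frac1n\bigl(\|Y\|_F^2-2\,\mathrm{tr}(Y^\top XA^\top)+\mathrm{tr}(AX^\top XA^\top)\bigr)$. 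This is a constant, plus a term linear in $A$, plus the quadratic form $q(A)=\frac1n\mathrm{tr}(AX^\top XA^\top)$.

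For plain convexity there are two routes. The short one is to note that $A\mapsto y_i-X_iA^\top$ is affine, $\|\cdot\|_2^2$ is convex, composition with an affine map preserves convexity, and nonnegative sums preserve convexity (the standard rules in \cite{boyd2004convex}). I will give that argument first, because it needs no spectral information.

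For strong convexity I will compute the second-order behaviour exactly. For any direction $H\in\mathbb{R}^{K\times d}$,
\[
\mathcal{L}(A+tH)=\mathcal{L}(A)+t\,\langle\nabla\mathcal{L}(A),H\rangle+t^2\,\tfrac1n\|XH^\top\|_F^2,
\]
so the second directional derivative is $\frac2n\|XH^\top\|_F^2=\frac2n\sum_{k=1}^K H_k X^\top X H_k^\top$, where $H_k$ is row $k$ of $H$. In vectorized form the Hessian is $\frac2n\,(I_K\otimes X^\top X)$. Its eigenvalues are those of $\frac2n X^\top X$, each repeated $K$ times, so
\[
\tfrac2n\|XH^\top\|_F^2\ \ge\ \tfrac2n\lambda_{\min}(X^\top X)\,\|H\|_F^2 ,
\]
with equality when every row of $H$ is a bottom eigenvector. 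This gives strong convexity with modulus $\mu=\frac2n\lambda_{\min}(X^\top X)$, and $\mu$ is the best possible constant. Since the function is quadratic, the inequality $\mathcal{L}(tA_1+(1-t)A_2)\le t\mathcal{L}(A_1)+(1-t)\mathcal{L}(A_2)-\frac{\mu}{2}t(1-t)\|A_1-A_2\|_F^2$ follows directly from this expansion, with no Taylor remainder to control.

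The main point needing care is the ``iff''. Here $X^\top X$ is positive semidefinite, and $\lambda_{\min}(X^\top X)>0$ exactly when $Xv=0$ forces $v=0$, i.e.\ when $X$ has full column rank. Conversely, if $Xv=0$ for some $v\neq0$, take $H=e_1v^\top$. Then $\mathcal{L}$ is affine along $A+tH$, so it is not strongly convex for any positive modulus. I would also flag that in the attention model the features $\tilde Q_{ik}$ depend on $A$. The theorem as stated concerns the fixed-feature linear score $X_iA^\top$, and I will prove exactly that statement.
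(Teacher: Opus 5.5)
Your proof is correct and follows essentially the same route as the paper's: you rewrite the loss as $\frac1n\|Y-XA^\top\|_F^2$, recognize it as a quadratic whose Hessian is $\frac2n(I_K\otimes X^\top X)$, and read off convexity and the modulus $\frac2n\lambda_{\min}(X^\top X)$ from its spectrum. Your explicit converse, taking $H=e_1v^\top$ with $Xv=0$ so that the loss is affine along that direction, is a genuine improvement, since the paper's proof only establishes the ``if'' direction of the stated ``iff''.
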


\begin{proof}
Stack $Y\in\mathbb{R}^{n\times K}$ with rows $y_i^\top$. Then
\[
\mathcal{L}_{\text{squared}}(A)=\frac{1}{n}\,\|\,Y - XA^\top\,\|_F^2
= \frac{1}{n}\,\big\|\, (I_K\otimes X)\,\mathrm{vec}(A) - \mathrm{vec}(Y)\,\big\|_2^2 .
\]
This is a quadratic function of $a:=\mathrm{vec}(A)$ with Hessian
\[
\nabla^2_a \mathcal{L}_{\text{squared}}(A) \;=\; \frac{2}{n}\,(I_K\otimes X^\top X)\ \succeq\ 0,
\]
so the loss is convex. If $X^\top X\succ 0$ (i.e., $X$ has full column rank), then $I_K\otimes X^\top X\succ 0$, giving strong convexity with modulus $\tfrac{2}{n}\lambda_{\min}(X^\top X)$. For completeness, the gradient w.r.t.\ $A$ is
\[
\nabla_A \mathcal{L}_{\text{squared}}(A)=\frac{2}{n}\big(A X^\top X - Y^\top X\big),
\]
confirming the quadratic/affine structure. Since sums of convex functions and compositions of convex quadratics with affine maps are convex, the result follows.
\end{proof}

We observe that incorporating an affine bias (by augmenting $X$ with a column of ones) maintains convexity in $(A,b)$. However, if the scores are nonlinearly dependent on $A$ (for instance, through a deep neural network), the loss function may not be convex.

\subsection{Convexity of Nuclear Norm}
\label{sec:nuclear_norm_convexity_proof}

The nuclear norm of a matrix \( A \) is defined as:
\[
\|A\|_* = \sum_{i} \sigma_i(A),
\]
where \( \sigma_i(A) \) are the singular values of \( A \).  The following is a classical result in matrix analysis, as referenced in~\cite{horn2013matrix}.

\begin{theorem}[Convexity of Nuclear Norm]
\label{thm:nuclear_norm_convexity}
The nuclear norm is convex.
\end{theorem}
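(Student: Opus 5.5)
The plan is to show that $\|\cdot\|_*$ is a norm: it is absolutely homogeneous and satisfies the triangle inequality. Convexity then follows at once. For $A,B$ of the same size and $t\in[0,1]$,
\[
\|tA+(1-t)B\|_* \le \|tA\|_*+\|(1-t)B\|_* = t\|A\|_*+(1-t)\|B\|_*.
\]
Homogeneity is immediate, since $\sigma_i(cA)=|c|\,\sigma_i(A)$; it follows from the SVD of $A$ by absorbing the sign of $c$ into $U$.

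The substantive step is the triangle inequality. To avoid perturbation bounds on individual singular values, I will first prove a variational (dual) characterization:
\[
\|A\|_* = \max_{\|Z\|_{2}\le 1} \langle Z, A\rangle = \max_{\|Z\|_2\le 1}\mathrm{tr}(Z^\top A),
\]
where $\|Z\|_2=\sigma_1(Z)$ is the spectral norm.

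For the lower bound, write the SVD $A=U\Sigma V^\top$ and take $Z=UV^\top$ (thin SVD). Then $\|Z\|_2\le 1$ and $\mathrm{tr}(Z^\top A)=\mathrm{tr}(\Sigma)=\|A\|_*$.

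For the upper bound, let $\|Z\|_2\le 1$ and expand
\[
\mathrm{tr}(Z^\top A)=\sum_i \sigma_i(A)\, u_i^\top Z v_i .
\]
Each factor satisfies $|u_i^\top Z v_i|\le \|u_i\|\,\|Z\|_2\,\|v_i\|\le 1$, so $\mathrm{tr}(Z^\top A)\le\sum_i\sigma_i(A)$. This upper bound is the only place requiring care, and expanding $A$ as the sum of rank-one terms $\sigma_i u_i v_i^\top$ disposes of it cleanly.

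With the characterization in hand, $\|\cdot\|_*$ is a pointwise supremum of linear functions $A\mapsto\langle Z,A\rangle$ indexed by the spectral-norm unit ball. Such a supremum is convex by \cite[§3.2.3]{boyd2004convex}, the same principle used for the hinge loss in Theorem~\ref{thm:hinge_convexity}. It also gives subadditivity directly:
\[
\max_Z\langle Z,A+B\rangle\le\max_Z\langle Z,A\rangle+\max_Z\langle Z,B\rangle .
\]

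Finally, the reshaping in Algorithm~\ref{alg:nuclear_norm_projection} is a linear bijection. The nuclear norm of the reshaped matrix is therefore a convex function composed with a linear map, which is again convex in $A$.
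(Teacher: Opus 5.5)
Your proof is correct and follows the paper's main argument: write $\|A\|_*$ as the supremum of $\mathrm{tr}(Z^\top A)$ over the spectral-norm unit ball, and conclude convexity because a pointwise supremum of linear functions is convex. You go further than the paper by actually proving this duality, using $Z=UV^\top$ for the lower bound and the rank-one expansion $\sum_i \sigma_i(A)\,u_i^\top Z v_i$ for the upper bound, whereas the paper only asserts it; as a result you derive the triangle inequality, while the paper's ``more directly'' remark simply presupposes it.
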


\begin{proof}
The nuclear norm is the dual of the spectral norm (operator norm):
\[
\|A\|_* = \sup_{\|B\|_{\text{op}} \leq 1} \text{Tr}(B^\top A)
\]

Since it is defined as the supremum of affine functions in $A$, it is convex. Alternatively, $\|A\|_*$ can be expressed as:
\[
\|A\|_* = \text{Tr}(\sqrt{A^\top A})
\]

The function $\text{Tr}(\sqrt{M})$ for positive semidefinite $M$ is concave in $M$ (by matrix monotonicity), but when applied to $A^\top A$, the overall function remains convex in $A$ because it can be shown to be the sum of singular values, which are convex functions of the matrix entries.

More directly: for any $A_1, A_2$ and $t \in [0,1]$, the triangle inequality for nuclear norm gives:
\[
\|tA_1 + (1-t)A_2\|_* \leq t\|A_1\|_* + (1-t)\|A_2\|_*
\]
which is the definition of convexity.
\end{proof}

\subsection{Overall Optimization Problem}
\label{sec:overall_convexity}

Our complete optimization problem is:
\begin{equation}
\min_{A} \mathcal{L}(X, A) + \lambda \|A\|_*
\end{equation}
subject to attention weights computed via nonexpansive simplex projection.

\begin{theorem}[End-to-end convexity under fixed or convexly-lifted attention]
\label{thm:overall_convexity}
Let $A\in\mathbb{R}^{K\times d}$ and suppose the loss $\mathcal{L}(X,A)$ is convex in $A$ (e.g., multi-class hinge or squared loss). Consider
\[
\min_{A}\;\mathcal{L}(X,A)\;+\;\lambda\|A\|_*,
\quad\text{with attention weights computed via nonexpansive simplex projection.}
\]
The optimization is convex in $A$ in either of the following settings:
\begin{enumerate}
\item[\textnormal{(i)}] \textbf{Fixed attention:} the attention weights are computed from the data (and any fixed hyperparameters) and do \emph{not} depend on $A$.
\item[\textnormal{(ii)}] \textbf{Convex lifting with partial minimization:} introduce attention variables $S=\{S_i\}$ constrained to the simplex (each row in $\Delta$), and assume there is a jointly convex function $G(A,S)$ such that the model’s ``nonexpansive projection step'' is realized by minimizing $G$ over $S$; i.e.,
\[
F(A)\;:=\;\min_{S\in\Delta}\;G(A,S)
\]
is part of the objective. Then the problem
\[
\min_{A}\;\big(\mathcal{L}(X,A)+F(A)\big)\;+\;\lambda\|A\|_*
\]
is convex in $A$.
\end{enumerate}
A common instance of \textnormal{(ii)} is
\[
G(A,S)\;=\;\sum_{i}\tfrac12\big\|S_i - Z_i(A)\big\|_F^2\;+\;\psi(S),
\quad Z_i(A)\;=\;\text{affine in }A,\ \ \psi\ \text{convex}, 
\]
for which $\min_{S\in\Delta}G(A,S)=\sum_i \tfrac12 d_\Delta^2\!\big(Z_i(A)\big)+\psi^\Pi(A)$ with $d_\Delta$ the Euclidean distance to the simplex and $\psi^\Pi$ convex.
\end{theorem}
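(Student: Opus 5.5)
The plan is to reduce everything to three standard closure rules: sums of convex functions are convex, convex functions composed with affine maps are convex, and partial minimization of a jointly convex function over a convex set is convex. The two cases are handled separately. In both, the regularizer $\lambda\|A\|_*$ with $\lambda\ge 0$ is convex by Theorem~\ref{thm:nuclear_norm_convexity}, so it suffices to show that the remaining part of the objective is convex in $A$.

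\emph{Case (i).} If the attention weights $\alpha$ do not depend on $A$, then the attended features $\tilde Q_{ik}=\sum_p\alpha_{ik,p}Q_{i,p}$ are fixed data. The class scores $f_k(X_i)=\langle\tilde Q_{ik},A_k\rangle$ are then linear in $A$. Theorem~\ref{thm:hinge_convexity} and Theorem~\ref{thm:squared_convexity} apply verbatim, with $X_i$ replaced by the fixed attended features, so $\mathcal L(X,\cdot)$ is convex. Adding the convex term $\lambda\|A\|_*$ preserves convexity, which finishes this case. One could also note that a nuclear-norm ball constraint in place of the penalty gives a convex feasible set, so the constrained form (Equation~\ref{eq:nuclear_norm_constraint}) is convex as well.

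\emph{Case (ii).} The key step is to show that $F(A)=\min_{S\in\Delta}G(A,S)$ is convex whenever $G$ is jointly convex and the feasible set $\Delta^{(\cdot)}$ (a product of simplices, one per row $S_i$) is convex and nonempty. I would prove this directly, as in \cite[§3.2.5]{boyd2004convex}. Take $A_1,A_2$ and $\varepsilon>0$, and pick $S^{(1)},S^{(2)}\in\Delta$ with $G(A_j,S^{(j)})\le F(A_j)+\varepsilon$. The convex combination $tS^{(1)}+(1-t)S^{(2)}$ is feasible. Joint convexity then gives
\[
F\bigl(tA_1+(1-t)A_2\bigr)\le tF(A_1)+(1-t)F(A_2)+\varepsilon ,
\]
and letting $\varepsilon\to 0$ yields convexity. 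Compactness of $\Delta$ means the minimum is attained whenever $G(A,\cdot)$ is lower semicontinuous, which removes the need for $\varepsilon$ altogether. We must also ensure $F>-\infty$, so that $F$ is a proper function. This holds in the instance below because $G\ge\min\psi>-\infty$ on the compact set $\Delta$, since $\psi$ is convex and hence continuous there when finite. The objective $\mathcal L+F+\lambda\|A\|_*$ is then a sum of convex functions.

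For the stated instance, first take $\psi\equiv 0$. The minimization over $S$ separates across rows, giving $F(A)=\sum_i\tfrac12 d_\Delta^2(Z_i(A))$. Each term is the convex function $\tfrac12 d_\Delta^2$ from Theorem~\ref{thm:simplex_convexity}, part~2, composed with the affine map $Z_i$, and is therefore convex. This matches the paper's usage, where $s_k(A)$ is affine in $A$ through Equation~\ref{eq:attention_scores}.

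The main obstacle is the claimed closed form $\min_S G=\sum_i\tfrac12 d_\Delta^2(Z_i(A))+\psi^\Pi(A)$ for a general convex $\psi$. The minimizer over $S$ is no longer $\Pi_\Delta(Z_i(A))$, so this decomposition does not hold literally. I would first try to avoid needing it. Define $\psi^\Pi(A):=F(A)-\sum_i\tfrac12 d_\Delta^2(Z_i(A))$ only as notation, and rely on the partial-minimization argument above, which already shows that $F$ itself is convex; this is all the theorem requires. If a genuinely convex $\psi^\Pi$ is wanted, I would restrict to the cases where $\psi$ depends only on $S$ through terms that commute with the projection. For example, a linear $\psi(S)=\langle c,S\rangle$ shifts $Z_i$ to $Z_i-c$ up to an additive term that is affine in $A$. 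I would state the closed form only for such $\psi$, with $\psi^\Pi$ affine.
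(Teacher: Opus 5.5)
Your proposal is correct and takes essentially the same route as the paper. Case (i) uses convexity of $\lambda\|A\|_*$ plus the sum rule; case (ii) uses convexity of the partial minimum of a jointly convex $G$ over the product of simplices, which the paper simply cites without your $\varepsilon$-argument or the check that $F>-\infty$; and the instance uses $\tfrac12 d_\Delta^2\circ Z_i$. Your caveat about $\psi^\Pi$ is justified. The paper's proof also only evaluates $\min_{S_i\in\Delta}\tfrac12\|S_i-Z_i(A)\|_F^2$, i.e.\ the case $\psi\equiv 0$, and for general convex $\psi$ (already for $\psi(S)=\tfrac12\|S-s_0\|^2$) the residual $F-\sum_i\tfrac12 d_\Delta^2\circ Z_i$ can fail to be convex, so getting convexity of $F$ directly from partial minimization, as you do, is the right way to handle it.
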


\begin{proof}
\textbf{Case (i).} If attention weights are fixed w.r.t.\ $A$, they are constants inside $\mathcal{L}$. By assumption $\mathcal{L}(X,A)$ is convex in $A$; the nuclear norm $A\mapsto\|A\|_*$ is convex; and the sum of convex functions is convex \cite[§3.2]{boyd2004convex}. Hence the problem is convex.

\smallskip
\textbf{Case (ii).} The feasible set for $S$ (simplex constraints row-wise) is convex and closed. If $G(A,S)$ is jointly convex in $(A,S)$, then the \emph{partial minimization} (or marginalization)
\[
F(A)\;=\;\inf_{S\in\Delta} G(A,S)
\]
is convex in $A$ \cite[§3.2.5]{boyd2004convex}, \cite[Thm.~5.5]{rockafellar1970convex}. Therefore $\mathcal{L}(X,A)+F(A)$ is convex in $A$, and adding $\lambda\|A\|_*$ preserves convexity.

For the concrete instance $G(A,S)=\sum_i \tfrac12\|S_i-Z_i(A)\|_F^2+\psi(S)$ with $Z_i$ affine in $A$ and $\psi$ convex, $G$ is jointly convex. Moreover,
\[
\min_{S_i\in\Delta} \tfrac12\|S_i-Z_i(A)\|_F^2
\;=\;\tfrac12\, d_\Delta^2\!\big(Z_i(A)\big),
\]
where $d_\Delta$ is the Euclidean distance to $\Delta$. The function $d_\Delta^2$ is convex and differentiable with gradient $z-\Pi_\Delta(z)$ \cite[Prop.~12.29]{bauschke2011convex}, \cite[§4]{parikh2014proximal}; composition with an affine map in $A$ preserves convexity \cite[§3.2.2]{boyd2004convex}. Hence $F(A)$ is convex, concluding the proof.
\end{proof}

\subsection{Why Our Approach Differs from Prior Work}

\textbf{Conventional Attention (Non-Convex):} Standard attention mechanisms use:
\[
\alpha_i = \frac{\exp(s_i)}{\sum_j \exp(s_j)} \quad \text{(softmax - NOT convex)}
\]

Softmax is \textbf{not} a convex function. Counterexample: Let $z = (0, 0)$, $z' = (2, 0)$, $t = 0.5$. Then:
\begin{align*}
\text{softmax}(tz + (1-t)z') &= \text{softmax}(1, 0) = (0.731, 0.269) \\
t \cdot \text{softmax}(z) + (1-t) \cdot \text{softmax}(z') &= 0.5(0.5, 0.5) + 0.5(0.881, 0.119) \\
&= (0.691, 0.309)
\end{align*}

Since $(0.731, 0.269) \not\leq (0.691, 0.309)$ component-wise, Jensen's inequality is violated. Therefore, softmax is non-convex.

\textbf{Our Approach (Convex):} We use simplex projection $\Pi_{\Delta}$, which is provably convex (Theorem~\ref{thm:simplex_convexity}). This is the key innovation that enables convex attention.

\subsection{Empirical Convexity Verification}

We empirically verified convexity by testing whether:
\[
\mathcal{L}(tA_1 + (1-t)A_2) \leq t\mathcal{L}(A_1) + (1-t)\mathcal{L}(A_2)
\]
for randomly sampled weight matrices $A_1, A_2$ and $t = 0.5$.

\textbf{Results:} Across 100 random trials on test data:
\begin{itemize}
    \item Hinge loss: Convexity inequality satisfied in 100/100 trials (within numerical precision $10^{-6}$)
    \item Squared loss: Convexity inequality satisfied in 100/100 trials
\end{itemize}

This empirical verification confirms the theoretical convexity of our approach.

\section{Reproducibility}
\subsection{Optimized Hyperparameters for Convex Attention}
\label{sec:optimized_hyperparameters}

This section documents the hyperparameter configurations obtained through Bayesian optimization for both gesture types. We employed Optuna \cite{akiba2019optuna} with Tree-structured Parzen Estimator (TPE) sampling to explore the hyperparameter space efficiently.

\subsubsection{Hyperparameter Search Space}

The optimization explored the following ranges:
\begin{itemize}
    \item Nuclear norm radius: $R \in [1.0, 50.0]$ (log-scale)
    \item Nystrom dimension: $m \in \{2, 3, \ldots, 10\}$
    \item RBF kernel width: $\gamma \in [0.1, 10.0]$ (log-scale)
    \item Learning rate: $\eta \in [0.0001, 0.1]$ (log-scale)
    \item Training iterations: $T \in \{50, 100, 150, 200, 300\}$
    \item Mini-batch size: $|\mathcal{B}| \in \{16, 32, 64\}$
    \item Number of mini-batches: $B \in \{50, 100, 150, 200\}$
    \item Variance (SVD components): $\text{var} \in \{10, 20, 30, 40, 50\}$
\end{itemize}

Each trial was evaluated using 5-fold cross-validation to ensure robust hyperparameter selection.

\subsubsection{Optimized Configurations}

\textbf{Tap Gestures with Convex Attention (54 trials):}

The best configuration achieved \textbf{100.00\% ± 0.00\%} accuracy:
\begin{itemize}
    \item Nuclear norm radius: $R = 5.158$
    \item Nystrom dimension: $m = 9$
    \item RBF kernel width: $\gamma = 0.789$
    \item Learning rate: $\eta = 0.0297$
    \item Training iterations: $T = 200$
    \item Mini-batch size: $|\mathcal{B}| = 16$
    \item Number of mini-batches: $B = 128$ (default)
    \item Variance: $\text{var} = 50$
\end{itemize}

This configuration converged in 10 epochs during final 10-fold cross-validation. The relatively high Nystrom dimension ($m=9$) enables rich feature representation while maintaining compact model size (360 parameters total).

\textbf{Swipe Gestures with Convex Attention (22 trials):}

The best configuration achieved \textbf{100.00\% ± 0.00\%} accuracy:
\begin{itemize}
    \item Nuclear norm radius: $R = 10.770$
    \item Nystrom dimension: $m = 3$
    \item RBF kernel width: $\gamma = 0.135$
    \item Learning rate: $\eta = 0.0703$
    \item Training iterations: $T = 300$
    \item Mini-batch size: $|\mathcal{B}| = 16$
    \item Number of mini-batches: $B = 128$ (default)
    \item Variance: $\text{var} = 30$
\end{itemize}

This configuration converged in just 1 epoch, indicating that swipe gestures are easier to classify than tap gestures. The lower Nystrom dimension ($m=3$) reflects this simplicity while achieving perfect accuracy.

\subsubsection{Baseline Configurations (Without Attention)}

For comparison, we also optimized configurations without the convex attention mechanism:

\begin{itemize}
    \item Nuclear norm radius: $R = 1.084$
    \item Nystrom dimension: $m = 10$
    \item RBF kernel width: $\gamma = 4.623$
    \item Learning rate: $\eta = 0.00043$ (too low)
    \item Training iterations: $T = 100$
    \item Mini-batch size: $|\mathcal{B}| = 64$
    \item Variance: $\text{var} = 20$
\end{itemize}

\subsubsection{Recommendations for Practitioners}

Based on our optimization experience, we recommend:

\begin{enumerate}
    \item \textbf{Start with attention:} The convex attention mechanism provides both higher accuracy and greater robustness, reducing the need for extensive hyperparameter tuning.

    \item \textbf{Learning rate is critical:} Use log-scale search in range $[0.001, 0.1]$. Too-low learning rates (< 0.001) prevent convergence; too-high rates (> 0.1) cause instability.

    \item \textbf{Balance Nystrom dimension and nuclear norm:} Higher $m$ requires larger $R$ to avoid over-regularization. A rule of thumb: $R \approx 0.5 \times m \times P$ where $P$ is the number of patches.

    \item \textbf{Use small mini-batches:} We found $|\mathcal{B}| = 16$ consistently outperformed larger batch sizes, likely due to better gradient noise providing implicit regularization.

    \item \textbf{Computational budget:} For resource-constrained applications, use $m \in \{3, 5\}$, $T \in \{100, 200\}$, and $B = 128$. This reduces memory footprint while maintaining >99\% accuracy.
\end{enumerate}
\end{document}